\documentclass[journal,twoside,web]{ieeecolor}
\usepackage{tmi}
\usepackage{cite}
\usepackage{amsmath,amssymb,amsfonts}
\usepackage{algorithmic}
\usepackage{graphicx}
\usepackage{textcomp}
\usepackage{booktabs}

\usepackage{algorithm}
\usepackage{array}
\usepackage[caption=false,font=normalsize,labelfont=sf,textfont=sf]{subfig}
\usepackage{stfloats}
\usepackage{url}
\usepackage{verbatim}
\usepackage{multirow}
\newcommand{\etal}{\textit{et~al.}}
\newtheorem{Theorem}{Theorem}

\def\BibTeX{{\rm B\kern-.05em{\sc i\kern-.025em b}\kern-.08em
    T\kern-.1667em\lower.7ex\hbox{E}\kern-.125emX}}
\begin{document}
\title{DuMeta++: Spatiotemporal Dual Meta-Learning for Generalizable Few-Shot Brain Tissue Segmentation Across Diverse Ages}
\author{Yongheng~Sun,
        Jun~Shu,
        Jianhua~Ma,
        and~Fan~Wang
\thanks{This work was supported in part by Brain Science and Brain-like Intelligence Technology -- National Science and Technology Major Project (No.~2022ZD0209000), Fundamental and Interdisciplinary Disciplines Breakthrough Plan of the Ministry of Education of China, (No.~JYB2025XDXM101), National Natural Science Foundation of China (No.~T2522028), and Natural Science Basic Research Program of Shaanxi, China (No.~2024JC-TBZC-09) (Corresponding authors: F. Wang and J. Ma)}
\thanks{Y. Sun, and J. Shu, are with the School of Mathematics and Statistics, Xi'an Jiaotong University, Xi'an 710049, China.}
\thanks{F. Wang and J. Ma are with Key Laboratory of Biomedical Information Engineering of Ministry of Education, School of Life Science and Technology, Xi'an Jiaotong University, Xi'an 710049, China. \\E-mail: jhma@xjtu.edu.cn; fan.wang@xjtu.edu.cn}
%\thanks{Y. Sun, J. Ma, and F. Wang are also with the Research Center for Intelligent Medical Equipment and Devices (IMED), Xi'an Jiaotong University, Xi'an 710049, China.}
}

\maketitle

\begin{abstract}
Accurate segmentation of brain tissues from MRI scans is critical for neuroscience and clinical applications, but achieving consistent performance across the human lifespan remains challenging due to dynamic, age-related changes in brain appearance and morphology. While prior work has sought to mitigate these shifts by using self-supervised regularization with paired longitudinal data, such data are often unavailable in practice. To address this, we propose \emph{DuMeta++}, a dual meta-learning framework that operates without paired longitudinal data. Our approach integrates: (1) meta-feature learning to extract age-agnostic semantic representations of spatiotemporally evolving brain structures, and (2) meta-initialization learning to enable data-efficient adaptation of the segmentation model. Furthermore, we propose a memory-bank-based class-aware regularization strategy to enforce longitudinal consistency without explicit longitudinal supervision. We theoretically prove the convergence of our DuMeta++, ensuring stability.  Experiments on diverse datasets (iSeg-2019, IBIS, OASIS, ADNI) under few-shot settings demonstrate that DuMeta++ outperforms existing methods in cross-age generalization. Code will be available at \url{https://github.com/ladderlab-xjtu/DuMeta++}.
\end{abstract}

\begin{IEEEkeywords}
Brain Tissue Segmentation, Lifespan, Meta Learning, Cross-Domain Generalization, Few-Shot.
\end{IEEEkeywords}

\section{Introduction}

Accurately segmenting brain tissues from magnetic resonance imaging (MRI) scans is crucial for a range of neuroscience and clinical activities, such as population-level examinations of cortical organization and personalized assessments of neurological conditions~\cite{fischl2012freesurfer}. Yet, a major challenge stems from the highly folded and spatiotemporally evolving brain morphologies across different ages~\cite{9339962}. As depicted in Fig.~\ref{fig:data_example}(a), infants younger than six months can exhibit inverted gray matter (GM) and white matter (WM) intensity contrasts, particularly around the isointense stage, where inter-tissue contrast is extremely low. In contrast, Fig.~\ref{fig:data_example}(b) illustrates how elderly individuals and patients with Alzheimer’s disease often present enlarged cerebrospinal fluid (CSF) compartments and GM atrophy. These pronounced differences compromise both the generalizability and longitudinal consistency of existing learning-based segmentation methods~\cite{milletari2016v, WU2022103541, lee2023fine, fan2022attention}, particularly when only a limited number of annotated samples are available at specific time points.

A number of methods have been proposed to address these domain shifts by learning longitudinally generalizable representations via self-supervised strategies~\cite{ren2022local, ouyang2021self}. Typically, such approaches employ contrastive regularization in a pre-training phase, where paired MRI scans across ages help reinforce consistency. Nonetheless, these strategies face notable practical hurdles: acquiring paired longitudinal data is costly and sometimes infeasible; when the final task is segmentation on an age group unseen during pre-training, a secondary fine-tuning step must be performed, which itself demands sufficient labeled data. Consequently, models trained in this way often struggle to balance generalization to various age domains with the limited supervision available.

\begin{figure}[t]
\setlength{\abovecaptionskip}{-1pt}
\setlength{\belowcaptionskip}{-15pt}
  \centering
% \fbox{\rule{0pt}{2in} \rule{0.9\linewidth}{0pt}}
\includegraphics[width=0.65\linewidth]{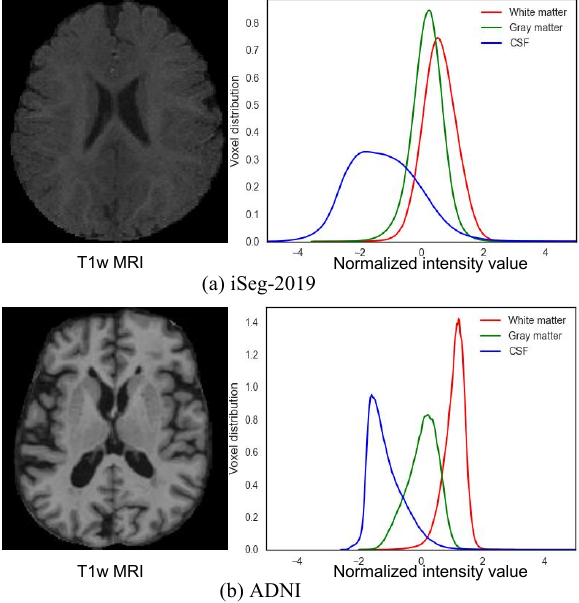}
   \caption{ (a) Brain morphology and tissue contrast of 6-month-old infants from the iSeg-2019 dataset. (b) Aging and Alzheimer's processes of the elderly from
the ADNI dataset.}
\label{fig:data_example}
\vspace{-10pt}
\end{figure}

In this work, we revisit longitudinally consistent representation learning and propose an alternative approach that eliminates the need for paired scans. Our framework is grounded in two key insights: \textbf{(1)} While MRI appearance varies substantially across ages, the semantic distinctions between tissue classes remain relatively stable over time; and \textbf{(2)} Accurate tissue segmentation requires a robust mapping function that integrates global semantic cues with age-specific high-resolution details of the brain.
Building on these insights, we introduce \emph{DuMeta++}, a unified meta-learning framework that jointly optimizes: (i) a time-invariant encoder that extracts semantically consistent features across spatiotemporally dynamic age groups, and (ii) a meta-initialized segmentation head capable of few-shot adaptation to new domains even with just a single labeled MRI scan. 

This work significantly extends our prior conference publication~\cite{sun2023dual} through four key advancements:
a memory-bank-based class-aware regularization mechanism enhancing longitudinal consistency, formal convergence guarantees for the dual meta-learning pipeline, expanded ablation studies validating design choices, and comprehensive benchmarking against state-of-the-art methods across diverse datasets.

%The remainder of this paper is organized as follows. Section II reviews related work on brain tissue segmentation, meta-learning, and longitudinally-consistent representation learning. Section III details the proposed DuMeta++ framework, including the dual meta-learning paradigm and the class-aware regularization strategy, and provides theoretical convergence guarantees. Section IV describes the experimental setup, presents comparative and ablation studies, and discusses the results. Finally, Section V offers a discussion of the findings and concludes the paper.

The main contributions of this paper are four-fold: 
\begin{itemize} 
\item We develop a spatiotemporal dual meta-learning (DuMeta++) framework that harnesses meta-feature learning and meta-initialization learning within a unified bilevel optimization, jointly learning a robust, age-agnostic feature extractor and a segmentation head that can be adapted with minimal labeled data. 
\item We introduce memory-bank-based regularization that enforces class-aware temporal alignment and spatial separation, ensuring tissue-specific feature consistency along ages without requiring paired longitudinal data. 
\item Our meta-learning pipeline only demands cross-sectional training sets and as few as one annotated sample from a new age domain to achieve accurate segmentation, making it practical for diverse lifespan imaging scenarios. 
\item Extensive experiments under the few-shot segmentation setting on datasets from diverse age groups (infant and aging: iSeg-2019, IBIS, OASIS, ADNI) confirm that our framework substantially surpasses existing methods designed for longitudinal consistency. \end{itemize}

\section{Related Work} 
%In this section, we briefly summarize prior research on brain tissue segmentation and few-shot cross-domain segmentation. The latter is organized into three themes: meta-feature learning, meta-initialization learning, and methods for ensuring longitudinally consistent representation learning.

\subsection{Brain Tissue Segmentation} %, SPM~\cite{friston2003statistical}, and BrainSuite~\cite{shattuck2002brainsuite}
Brain tissue segmentation of cerebrospinal fluid (CSF), gray matter (GM), and white matter (WM) from MRI is foundational for quantitative neuroimaging and downstream tasks such as disease monitoring~\cite{ashburner2005unified}. 
Classical pipelines, exemplified by widely used toolkits like FSL~\cite{jenkinson2012fsl} and FreeSurfer~\cite{fischl2012freesurfer}, span region-based, thresholding, clustering, and feature/classification paradigms, often paired with bias-field correction; while these approaches established early baselines, they are sensitive to acquisition artifacts and heterogeneous tissue appearance. 
In contrast, deep learning has become the de facto standard: U-Net–style 2D~\cite{ronneberger2015u} and 3D~\cite{milletari2016v} networks and their variants (with attention~\cite{oktay2018attention}, dense connections~\cite{huang2020unet}, and multi-modal fusion~\cite{zhuang2021aprnet}) report strong Dice scores across adult cohorts, and self-configuring frameworks such as nnU-Net~\cite{isensee2021nnu} further simplify model design. 
Community benchmarks (e.g., MRBrainS~\cite{mendrik2015mrbrains} for adults; iSeg-2019~\cite{sun2021multi} for 6-month-old infants) have standardized evaluations and accelerated progress, and specialized infant pipelines such as iBEAT~\cite{wang2023ibeat} complement these developments.
%iSeg-2017~\cite{wang2019benchmark} and zhou2018unet++, 

Despite strong in-distribution performance, brain tissue contrast and morphology shift across the lifespan, most prominently in infants (isointense phases) and older adults (atrophy and contrast changes), undermining generalization. DuMeta++ addresses this by meta-learning an age-invariant feature extractor and coupling it with a lightweight, rapidly adaptable segmentation head, with class-aware alignment to stabilize CSF/GM/WM representations across age groups.

\subsection{Meta-Feature Learning} Meta-learning generally comprises two core stages: meta-training and meta-testing~\cite{hospedales2021meta}. A prominent category within this domain is meta-feature learning (MFL)~\cite{liu2021investigating}, which seeks to train a shared feature extractor during the meta-training phase to handle multiple, potentially related, tasks. A separate task-specific head is then learned in the meta-testing phase for unseen downstream tasks. Conceptually, the shared feature extractor is treated as an explicit meta-learner, which is commonly optimized through a bilevel scheme.
Two principal gradient computation strategies characterize MFL. In the first, the meta-learner updates its parameters by performing multiple gradient-descent steps on the loss function, implementing an implicit relationship between inner and outer loops~\cite{calatroni2017bilevel,lorraine2018stochastic}. The second approach employs implicit function theory to calculate hyper-gradients~\cite{franceschi2018bilevel, franceschi2017bridge}. The former method is widely adopted in automated differentiation approaches to solve bilevel optimization problems.
For instance, Franceschi \etal~\cite{franceschi2018bilevel} treated the final layer of a classification model as the base-learner while regarding the remaining layers as the meta-learner, updating the latter with back-propagation in a multi-step procedure. 
%~\cite{foo2007efficient, chapelle2002choosing, seeger2008cross, calatroni2017bilevel, lorraine2018stochastic} foo2007efficient, domke2012generic, 
%~\cite{domke2012generic, franceschi2018bilevel, okuno2018ell_p, franceschi2017bridge, baydin2014automatic}

In our work, we decompose a segmentation network into a plug-and-play feature extractor (i.e., the meta-learner) and a task-specific segmentation head (i.e., the base-learner). We carry out meta-training for the feature extractor under an implicit gradient framework to learn anatomical representations that remain consistent over time, leveraging additional class-aware constraints to promote this consistency.

\subsection{Meta-Initialization Learning} A second key branch of meta-learning is meta-initialization learning (MIL), which focuses on discovering favorable initial network parameters that span multiple tasks~\cite{liu2021investigating}. Unlike most MFL methods, MIL typically lacks an explicit meta-learner; its initialization parameters can be viewed as an implicit meta-learner whose relationship to the base-learner emerges at the initialization. Many MIL techniques update this initialization through implicit gradients in the outer loop of a bilevel structure, using relationships established in the inner loop.
Among the early works in this domain, MAML~\cite{finn2017model} utilized the gradient from an initialization step to guide the optimization of the meta-learner. Subsequent efforts investigated how to reduce the computational burden of dealing with Hessian calculations in implicit gradient descent, such as by discarding second-order derivatives~\cite{li2017meta, song2019maml}. ANIL~\cite{raghu2019rapid}, for instance, removed the inner-loop updates for all but the final task-specific head, whereas Reptile~\cite{nichol2018first} proposed first-order algorithms to circumvent second-order overhead. Beyond these algorithmic refinements, several studies explored integrating MIL with domain generalization~\cite{liu2020shape}. For example, SAML~\cite{liu2020shape} simulated domain shifts through pseudo-meta-train and meta-test splits during training.
%~\cite{li2017meta, nichol2018first, song2019maml}~\cite{liu2020shape, liu2021semi, lee2021meta}, lee2021meta

While many existing MIL approaches attempt to initialize an entire network, our DuMeta++ framework streamlines this process by focusing only on a lightweight segmentation head once the feature extractor has already been trained for age-invariant semantic representation. By doing so, we simulate domain shifts during fine-tuning with MIL, reducing computational costs and facilitating rapid adaptation to new domains.

\subsection{Longitudinally-Consistent Representation Learning} One widely studied paradigm for deriving longitudinally-consistent representations is self-supervised learning (SSL), where models are typically pre-trained on data gathered at different time points~\cite{ren2022local,ouyang2021self,ouyang2023lsor,chen2023brain}. Most SSL approaches incorporate contrastive losses to align representations of similar inputs (e.g., patches from the same brain image under different augmentations) and separate dissimilar ones, thereby fostering invariance to temporal shifts.
For example, Ren \etal~\cite{ren2022local} introduced a patch-level spatiotemporal similarity constraint designed to enhance longitudinal consistency, together with additional orthogonality, variance, and covariance terms to avert mode collapse. Another representative work, LNE~\cite{ouyang2021self}, employed a longitudinal neighborhood embedding to promote coherent, low-dimensional trajectories in latent space. Likewise, LSOR~\cite{ouyang2023lsor} utilized soft self-organizing maps and trajectory regularization for coherent aging-related representations. Beyond self-supervised learning, Chen \etal~\cite{chen2023brain} formulated a joint feature regularization strategy to encourage stable segmentations over the lifespan.
%~\cite{chen2019self,chaitanya2020contrastive, zeng2021positional, park2020contrastive}
%building on SimSiam~\cite{chen2021exploring}, 

Our DuMeta++ strategy advances these ideas by jointly training a feature extractor and a segmentation head, aiming for both longitudinal consistency and efficient adaptation. In contrast to traditional SSLs, we do not demand paired longitudinal scans for meta-training; rather, our plug-and-play encoder is guided by class-aware regularization to preserve semantic distinctions over time. Furthermore, only the final segmentation head requires fine-tuning for a novel age group, facilitating streamlined adaptation with minimal labeled data.

\section{Method} \label{sec:method}

\begin{figure*}[t] \centering \includegraphics[width=0.75\linewidth]{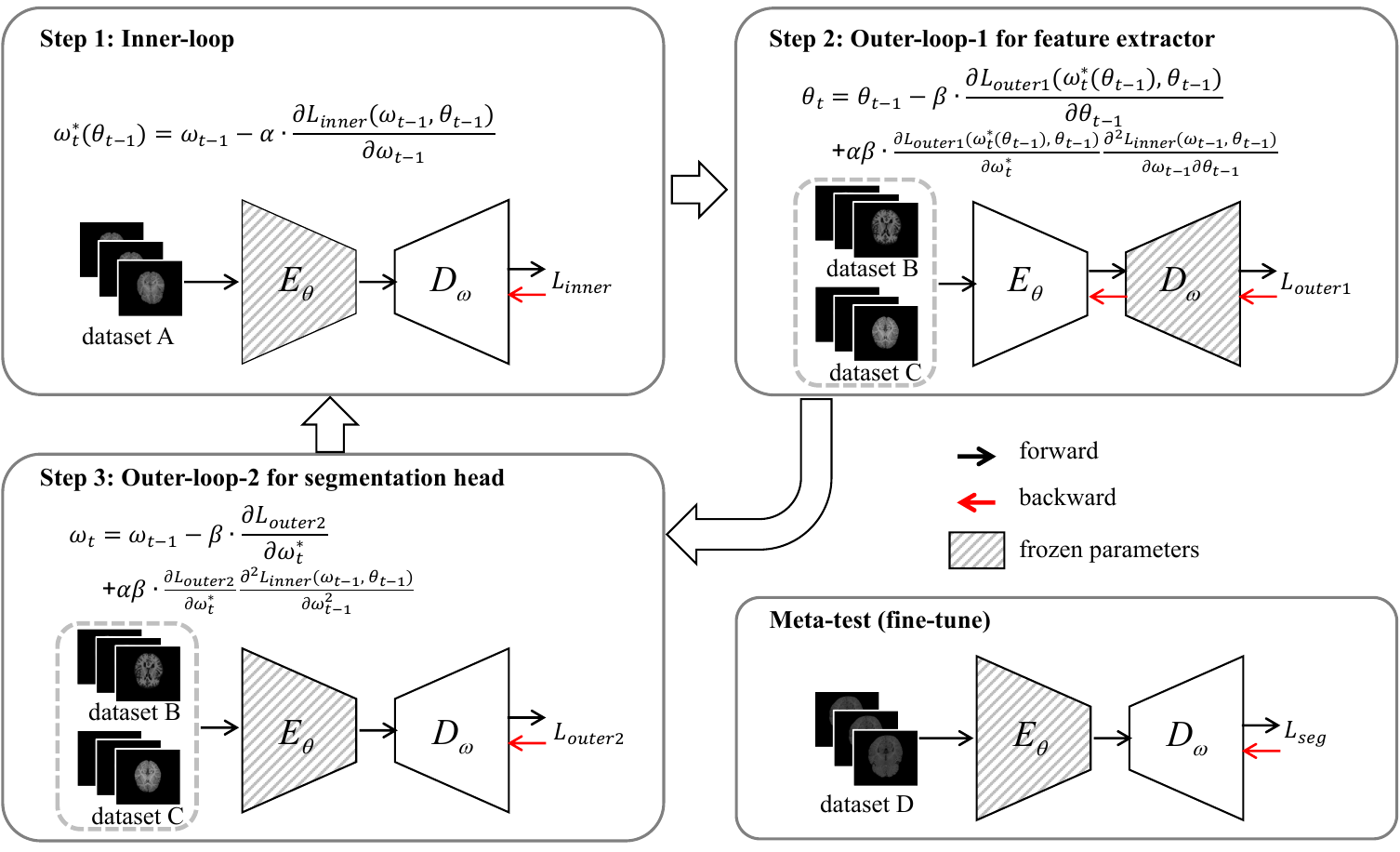} \caption{Schematic diagram of DuMeta++. The meta-training process is divided into three steps, leveraging a shared inner loop but leading to two distinct outer loops for learning a feature extractor and segmentation head. Specifically, steps (1) and (2) comprise the MFL procedure for the frozen feature extractor, while steps (1) and (3) realize the MIL procedure for establishing a well-initialized segmentation head. At inference time, we fine-tune only the segmentation head using data from the new, unseen domain.} \label{fig:method} \end{figure*}

\subsection{DuMeta++ Paradigm} We propose a dual meta-learning (DuMeta++) framework, illustrated in Fig.~\ref{fig:method}, to address the challenge of brain tissue segmentation across the lifespan. In the meta-training (pre-training) phase, DuMeta++ simultaneously learns (i) a universal, plug-and-play feature extractor $E_{\theta}(\cdot)$ that enforces longitudinally consistent anatomical representations, and (ii) a well-initialized segmentation head $D_{\omega}(\cdot)$. These two components are optimized via separate outer loops, each relying on a shared inner loop within a bilevel meta-learning scheme. During meta-testing (fine-tuning), we update only the segmentation head on a small number of labeled samples from an unseen age domain.

\subsubsection{Meta-Learning of the Universal Feature Extractor} \label{subsubsec:mfl} Our first objective is to learn $E_{\theta}(\cdot)$ such that it provides consistent, age-invariant representations across diverse time points. We treat $E_{\theta}(\cdot)$ as an explicit meta-learner and $D_{\omega}(\cdot)$ as its base-learner within a Meta-Feature Learning (MFL) paradigm.

Concretely, as shown in Fig.~\ref{fig:method}(1), for each meta-training iteration, we initially freeze $E_{\theta}(\cdot)$ while updating $D_{\omega}(\cdot)$ in the inner loop. We randomly sample one dataset from the meta-training pool and compute: 
\begin{equation}
    \omega^*_t(\theta_{t-1}) = \omega_{t-1} - \alpha \cdot\frac{\partial L_{\text{inner}}(\omega_{t-1}, \theta_{t-1})}{\partial \omega_{t-1}},
\label{eq:1}
\end{equation}
where $L_{\text{inner}}$ is the inner-loop loss (Dice + Cross-Entropy), $\alpha$ is the learning rate, and $\theta_{t-1}, \omega_{t-1}$ are the meta-learner and base-learner parameters at iteration $(t-1)$, respectively. As $L_{\text{inner}}$ also depends on $\theta_{t-1}$, the updated $\omega^*_t(\theta_{t-1})$ implicitly remains a function of $\theta_{t-1}$, thus forming the implicit relationship that will guide the next outer loop.

Since the inner loop can be interpreted as a mini fine-tuning step for the base-learner, we desire $E_{\theta}(\cdot)$ to be sufficiently robust so that, combined with a fine-tuned $D_{\omega}(\cdot)$, it yields strong segmentation outcomes. Hence, as depicted in Fig.~\ref{fig:method}(2), we select two additional datasets from the meta-training pool to refine $E_{\theta}(\cdot)$: 
\begin{equation}
\label{eq:2}
    \theta_t = \theta_{t-1} - \beta\cdot\frac{\partial L_{\text{outer}1}}{\partial\theta_{t-1}};
\end{equation}
\begin{align}
\label{eq:3}
    \frac{\partial L_{\text{outer}1}}{\partial\theta_{t-1}} = &\frac{\partial L_{\text{outer}1}( \omega^*_t(\theta_{t-1}), \theta_{t-1})}{\partial\theta_{t-1}} \nonumber\\
    + & \frac{\partial L_{\text{outer}1}( \omega^*_t(\theta_{t-1}), \theta_{t-1})}{\partial\omega^*_t} \frac{\partial\omega^*_t}{\partial\theta_{t-1}},
\end{align}
where $L_{\text{outer}1}$ comprises the segmentation objective (Dice + Cross-Entropy) and class-aware constraints (introduced in Sec.~\ref{subsec:regularization}). The gradient in Eq.(~\ref{eq:3}) contains both a \emph{direct} term minimizing $L_{\text{outer}1}$ and an \emph{indirect} term arising from the implicit relationship in Eq.(~\ref{eq:1}), ensuring that $E_{\theta}(\cdot)$ remains compatible with the updated base-learner for potential unseen domains. Explicitly expanding the indirect term: 
\begin{align}
\label{eq:4}
    \frac{\partial\omega^*_t}{\partial\theta_{t-1}} = & \frac{\partial(\omega_{t-1} - \alpha \cdot\frac{\partial L_{\text{inner}}(\omega_{t-1}, \theta_{t-1})}{\partial \omega_{t-1}})}{\partial\theta_{t-1}} \nonumber\\
    = &- \alpha \cdot\frac{\partial^2 L_{\text{inner}}(\omega_{t-1}, \theta_{t-1})}{\partial\omega_{t-1}\partial\theta_{t-1}}.
\end{align}
Substituting Eqs.(~\ref{eq:3}) and (~\ref{eq:4}) into Eq.(~\ref{eq:2}) yields: 
\begin{align}
\label{eq:5}
    \theta_t = & \theta_{t-1} - \beta\cdot\frac{\partial L_{\text{outer}1}(\omega^*_t(\theta_{t-1}), \theta_{t-1})}{\partial\theta_{t-1}}\nonumber\\
    + & \alpha\beta\cdot\frac{\partial L_{\text{outer}1}(\omega^*_t(\theta_{t-1}), \theta_{t-1})}{\partial\omega^*_t}\frac{\partial^2 L_{\text{inner}}(\omega_{t-1}, \theta_{t-1})}{\partial\omega_{t-1}\partial\theta_{t-1}},
\end{align}
thus reflecting both first-order (direct) and second-order (indirect) terms for updating the encoder.

\subsubsection{Meta-Learning of the Initialized Segmentation Head} \label{subsubsec:mil} Having determined the universal feature extractor $E_{\theta}(\cdot)$ as per Sec.~\ref{subsubsec:mfl}, we now learn a favorable initialization for $D_{\omega}(\cdot)$ through a Meta-Initialization Learning (MIL) scheme. In this phase, $E_{\theta}(\cdot)$ is held fixed, and the initial segmentation head weights, denoted $\phi_{t-1}$, act as an implicit meta-learner.

First, we initialize segmentation head $D_{\omega}(\cdot)$ with meta-learner parameter $\phi_{t-1}$
\begin{equation}
    \omega_{t-1} = \phi_{t-1}.
\end{equation}
A dataset is sampled to execute one inner-loop update: 
\begin{equation} \label{equ:inner-mil}
    \omega^*_t = \omega_{t-1} - \alpha \cdot\frac{\partial L_{\text{inner}}}{\partial \omega_{t-1}},
\end{equation}
after which we treat $\omega^*_t$ as a function of $\phi_{t-1}$. Keeping $E_{\theta}(\cdot)$ frozen, we then compute the loss on different datasets and back-propagate to update $\phi_{t-1}$ via: 
\begin{equation}
\label{eq:8}
    \phi_t = \phi_{t-1} - \beta\cdot\frac{\partial L_{\text{outer}2}}{\partial \phi_{t-1}};
\end{equation}
\begin{equation}
\label{eq:9}
    \frac{\partial L_{\text{outer}2}}{\partial \phi_{t-1}} = \frac{\partial L_{\text{outer}2}}{\partial\omega^*_t}\frac{\partial\omega^*_t}{\partial \phi_{t-1}};
\end{equation}
\begin{align}
\label{eq:10}
    \frac{\partial\omega^*_t}{\partial \phi_{t-1}} = & \frac{\partial(\omega_{t-1} - \alpha \cdot\frac{\partial L_{\text{inner}}}{\partial \omega_{t-1}})}{\partial \phi_{t-1}}\nonumber\\
    = & \mathbf{I} - \alpha\cdot\frac{\partial^2 L_{\text{inner}}}{\partial\phi_{t-1}^2},
\end{align}
Substituting Eqs.(~\ref{eq:9}) and(~\ref{eq:10}) back into Eq.~(~\ref{eq:8}) yields: 
\begin{equation}
    \phi_t = \phi_{t-1} - \beta\cdot\frac{\partial L_{\text{outer}2}}{\partial \omega_t^*}(\mathbf{I} - \alpha\cdot\frac{\partial^2 L_{\text{inner}}}{\partial\phi_{t-1}^2}).
\label{eq:11}
\end{equation}
In practice, second-order derivatives (i.e., $ \frac{\partial^2L_{\text{inner}}}{\partial \phi_{t-1}^2}$) can be omitted to limit complexity and stabilize training~\cite{nichol2018first}.

Both MFL (Sec.~\ref{subsubsec:mfl}) and MIL (Sec.~\ref{subsubsec:mil}) thus employ the same inner-loop updating mechanism (Eqs.(~\ref{eq:1}) and(~\ref{equ:inner-mil})), enabling us to unify them in a single shared procedure, as described in Algorithm~\ref{alg1}.

\begin{algorithm}[t] \caption{Dual Meta-Learning (DuMeta++)} \label{alg1} \begin{algorithmic}[1] \REQUIRE Meta-training pool (datasets A, B, and C), feature extractor $E_{\theta}(\cdot)$ with parameters $\theta$, and segmentation head $D_{\omega}(\cdot)$ with parameters $\omega$ \STATE Initialize $\theta$ and $\omega$ randomly \WHILE{not converged} \STATE \textbf{Inner loop:} \STATE Sample a mini-batch from one dataset \STATE Update $\omega$ using Eq.(~\ref{eq:1}) \STATE \textbf{Outer loop for the feature extractor:} \STATE Sample a mini-batch from the remaining two datasets \STATE Update $\theta$ using Eq.(~\ref{eq:5}) \STATE \textbf{Outer loop for the segmentation head:} \STATE Sample a mini-batch from the remaining two datasets \STATE Update $\omega$ using Eq.~(~\ref{eq:11}) \ENDWHILE \end{algorithmic} \end{algorithm}

\subsection{Class-Aware Regularization} \label{subsec:regularization} In Sec.~\ref{subsubsec:mfl}, the outer-loop loss $L_{\text{outer}1}$ for learning $E_{\theta}(\cdot)$ integrates a segmentation objective with a class-aware regularization term designed to support age-invariant anatomical representations. These regularizers, which encourage intra-tissue temporal alignment and inter-tissue spatial separation, stem from the core assumption that high-level differences among tissue types tend to remain consistent despite varying MRI appearances over time.

Unlike previous self-supervised learning methods~\cite{ren2022local, ouyang2021self} relying on image or patch-level contrastive objectives, we employ a \emph{weak-supervised} contrastive framework (i.e., according to pseudo labels) that aligns features of each tissue class directly---thereby simplifying the link to the final segmentation goal. Additionally, we apply these constraints at multiple feature scales within $D_{\omega}(\cdot)$ to optimize $E_{\theta}(\cdot)$ to achieve consistency across spatial resolutions.

Let $\{\mathbf{F}_k\}_{k=1}^K$  be the multi-scale feature maps from $D_{\omega}(\cdot)$, where each $\mathbf{F}_k \in \mathbb{R}^{BS \times NC_k \times H_k \times W_k \times D_k}$. Suppose we have tissue label maps $\mathbf{M}$ (or pseudo-labels), which are downsampled to match each $\mathbf{F}_k$, producing $\mathbf{M}_k$. Using these labels, we gather the voxel indices of specific tissues—gray matter (GM), white matter (WM), and cerebrospinal fluid (CSF)—and average their corresponding feature vectors to form $\mathbf{f}_k^{\text{GM}}$, $\mathbf{f}_k^{\text{WM}}$, and $\mathbf{f}_k^{\text{CSF}}$($\in \mathbb{R}^{BS \times NC_k}$).

Different from our previous work ~\cite{sun2023dual} which directly pushes and pulls sample-level features, we utilize memory-bank-based prototype features:
\begin{equation}
    \mathbf{p}_{k}^{\text{GM}} = \frac{1}{N} \sum_{i=0}^{N-1} \mathbf{f}_{k, i}^{\text{GM}}
\end{equation}
where $N$ is the memory bank capacity, and prototypes $\mathbf{p}_{k}^{\text{GM}}$, $\mathbf{p}_{k}^{\text{WM}}$, and $\mathbf{p}_{k}^{\text{CSF}}$ are aggregated from multiple mini-batches or datasets. We then employ a margin-based triplet loss to adaptively push or pull sample-level features relative to these prototypes. For instance, for dataset $B$ at scale $k$: 
\begin{equation}
\begin{split}
\mathcal{L}^{\text{GM}}_{B_k} = \max \Big( 0, \ 
& d(\mathbf{p}_{k}^{\text{GM}}, \mathbf{f}_{B_k}^{\text{GM}}) 
 - d(\mathbf{p}_{k}^{\text{GM}}, \mathbf{f}_{B_k}^{\text{CSF}}) \\
& - d(\mathbf{p}_{k}^{\text{GM}}, \mathbf{f}_{B_k}^{\text{WM}}) 
 + \lambda_1 \Big)
\end{split}
\label{eq:triplet}
\end{equation}
where $d(\mathbf{x},\mathbf{y}) = 1 - \frac{\mathbf{x}\cdot\mathbf{y}}{|\mathbf{x}||\mathbf{y}|}$ measures cosine distance, and $\lambda_1$ denotes the margin. Analogously, we define $\mathcal{L}^{\text{WM}}_{B_k}$, $\mathcal{L}^{\text{CSF}}_{B_k}$, and so forth for dataset $C$. Intuitively, this encourages representations of the same tissue to remain close across time, while pushing apart features belonging to different tissues. Samples that already lie within the correct distribution incur no additional gradient.

Summing these triplet-based terms over tissue types and scales for both datasets $B$ and $C$ produces the overall regularization loss: 
\begin{equation}
{L}_{reg} = \frac{1}{6K}\sum_{k=1}^{K} \left[ \mathcal{L}^{\text{GM}}_{B_k} + \mathcal{L}^{\text{WM}}_{B_k} + \mathcal{L}^{\text{CSF}}_{B_k} + \mathcal{L}^{\text{GM}}_{C_k} + \mathcal{L}^{\text{WM}}_{C_k} + \mathcal{L}^{\text{CSF}}_{C_k} \right] .
\end{equation}

The full outer-loop loss for meta-training the feature extractor thus becomes:
\begin{equation}
    L_{\text{outer1}} = L_{{seg}} + \lambda_2 L_{{reg}},
\end{equation}
where $L_{{seg}}$ is the segmentation loss (Dice + Cross-Entropy), and $\lambda_2$ is a constant (set to $0.1$ in our experiments).

\subsection{Convergence of the Dual Meta-Learning Pipeline} 
Formally, we perform a theoretical convergence analysis of our method for optimizing ${\omega}$ and ${\theta}$ through the bilevel optimization pipeline. The corresponding resutls show that DuMeta++ converges to critical points of both outer-loop loss functions under some mild conditions, as summarized in Theorems ~\ref{th1} and ~\ref{th2}, respectively. 
Due to space limitation, we cannot present detailed proofs of these two theorems, which can be found in the arXiv version.
%The proofs are presented in the supplementary material, available online.
\begin{Theorem} \label{th1}
Suppose the loss function $L_{\text{outer1}}$ is Lipschitz smooth with constant $L$, and the gradient of $\theta$ with respect to the loss function $L_{\text{outer1}}$ is Lipschitz continuous with constant $L$. Let the learning rate $\alpha_t, \beta_t, 1\leq t\leq T$ be monotonically descent sequences, and satisfy $\alpha_t=\min\{\frac{1}{L},\frac{c_1}{\sqrt{T}}\}, \beta_t=\min\{\frac{1}{L},\frac{c_2}{\sqrt{T}}\}$, for some $c_1,c_2>0$, such that $\frac{\sqrt{T}}{c_1}\geq L, \frac{\sqrt{T}}{c_2}\geq L$. Meanwhile, they satisfy $\sum_{t=1}^\infty \alpha_t = \infty,\sum_{t=1}^\infty \alpha_t^2 < \infty ,\sum_{t=1}^\infty \beta_t = \infty,\sum_{t=1}^\infty \beta_t^2 < \infty $. Then DuMeta++ can achieve $\mathbb{E}[ \|\nabla L_{\text{outer1}}(\omega_t(\theta_t))\|_2^2] \leq \epsilon$ in $\mathcal{O}(1/\epsilon^2)$ steps. More specifically,
	\begin{align}
		\min_{0\leq t \leq T} \mathbb{E}\left[ \left\|\nabla L_{\text{outer1}}(\omega_t(\theta_t))\right\|_2^2\right] \leq \mathcal{O}(\frac{C}{\sqrt{T}}),
	\end{align}
	where $C$ is a constant independent of the convergence process.
\end{Theorem}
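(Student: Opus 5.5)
The argument is the standard nonconvex SGD descent analysis, in the style of MW-Net / L2RW convergence proofs, applied to the meta-learner update of Eq.~(\ref{eq:5}). Write $\mathcal{L}(\theta) := L_{\text{outer1}}(\omega^*_{t}(\theta),\theta)$ for the outer objective composed with the one-step inner update of Eq.~(\ref{eq:1}). The stochastic hypergradient used in Eq.~(\ref{eq:2}) is computed on a mini-batch; denote it by $\nabla \mathcal{L}(\theta_{t-1}) + \xi_t$, where $\mathbb{E}[\xi_t]=0$ and $\mathbb{E}\|\xi_t\|^2\le \sigma^2$. The variance bound follows from the Lipschitz constant bounding the gradients.

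\textbf{Step 1: smoothness of $\mathcal{L}$.} From Eq.~(\ref{eq:4}), $\partial\omega^*_t/\partial\theta = -\alpha_t \partial^2_{\omega\theta}L_{\text{inner}}$. Its norm is at most $\alpha_t L$ because the gradients are $L$-Lipschitz. The chain rule in Eq.~(\ref{eq:3}) then gives
\begin{equation*}
\|\nabla\mathcal{L}(\theta)-\nabla\mathcal{L}(\theta')\|\le \tilde L\|\theta-\theta'\|,
\end{equation*}
with $\tilde L = L(1+\alpha_t L)^2 + \alpha_t L\rho \le 2L+\rho/L$ (using $\alpha_t\le 1/L$). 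Here $\rho$ is a bound on the third derivative or the Lipschitz constant of the Hessian, which I would fold into ``Lipschitz smooth'' in the hypotheses. This bound is uniform in $t$.

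\textbf{Step 2: one-step descent.} I also need to control the change of $\mathcal{L}$ caused by the inner-loop iterate $\omega_{t-1}\mapsto\omega_t$ between outer steps. Split
\begin{equation*}
\mathcal{L}_{t+1}(\theta_{t+1})-\mathcal{L}_t(\theta_t)=\big[\mathcal{L}_{t+1}(\theta_{t+1})-\mathcal{L}_{t+1}(\theta_t)\big]+\big[\mathcal{L}_{t+1}(\theta_t)-\mathcal{L}_t(\theta_t)\big].
\end{equation*}
The second bracket is bounded via Lipschitz continuity by $L\|\omega_{t+1}-\omega_t\|\le \alpha_t L\,G$, where $G$ is a gradient bound. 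For the first bracket, the descent lemma gives
\begin{equation*}
\le -\beta_t\langle\nabla\mathcal{L},\nabla\mathcal{L}+\xi\rangle+\tfrac{\tilde L\beta_t^2}{2}\|\nabla\mathcal{L}+\xi\|^2.
\end{equation*}
Take conditional expectations and use $\beta_t\le 1/L$ to absorb the quadratic term. This yields
\begin{equation*}
\beta_t(1-\tfrac{\tilde L\beta_t}{2})\,\mathbb{E}\|\nabla\mathcal{L}(\theta_t)\|^2 \le \mathbb{E}[\mathcal{L}_t-\mathcal{L}_{t+1}] + \alpha_t LG + \tfrac{\tilde L\sigma^2\beta_t^2}{2}.
\end{equation*}

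\textbf{Step 3: telescoping.} Sum from $t=1$ to $T$ and use the lower bound $\mathcal{L}\ge 0$ (Dice+CE and hinge losses are nonnegative). Then
\begin{equation*}
\min_t\mathbb{E}\|\nabla\mathcal{L}\|^2 \le \frac{2\mathcal{L}_1+ 2LG\sum_t\alpha_t+\tilde L\sigma^2\sum_t\beta_t^2}{\sum_t\beta_t}.
\end{equation*}
Substitute $\beta_t=\min\{1/L,c_2/\sqrt T\}=c_2/\sqrt T$ (valid since $\sqrt T/c_2\ge L$), and likewise $\alpha_t=c_1/\sqrt T$. The denominator is $c_2\sqrt T$. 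The term $\sum\beta_t^2 = c_2^2$ contributes $\mathcal{O}(1/\sqrt T)$.

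\textbf{Main obstacle.} The hard part is the drift term $\sum_t\alpha_t$ from the inner update, which naively gives $\sqrt T$ in the numerator and no decay. My first attempt would bound the drift more finely: after the inner step, $\mathcal{L}_{t+1}(\theta_t)-\mathcal{L}_t(\theta_t)$ is itself a descent-type step of order $-\alpha_t\|\nabla_\omega\|^2+\mathcal{O}(\alpha_t^2)$ on a correlated loss, so only the $\mathcal{O}(\alpha_t^2)$ remainder survives. Summed, this remainder is $c_1^2=\mathcal{O}(1)$. If that fails, I would invoke the summability assumptions $\sum\alpha_t^2<\infty$ together with the monotone decrease of $\alpha_t$ to obtain a bound of $\mathcal{O}(1)$. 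Either way, the final bound is $\mathcal{O}(C/\sqrt T)$ with $C$ collecting $\mathcal{L}_1$, $L$, $\rho$, $\sigma^2$, $c_1$, and $c_2$. Setting $C/\sqrt T\le\epsilon$ then gives the claimed $\mathcal{O}(1/\epsilon^2)$ iteration complexity.
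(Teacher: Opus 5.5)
Your skeleton is the same as the paper's. You split the one-step change into a $\theta$-step at fixed inner index (descent lemma with noise $\xi_t$) and an inner-loop drift $\mathcal{L}_{t+1}(\theta_t)-\mathcal{L}_t(\theta_t)$ at fixed $\theta_t$, telescope using nonnegativity of the loss, and divide by $\sum_t\beta_t=c_2\sqrt{T}$. The genuine gap is exactly the one you flag, and neither of your repairs closes it. The fallback is false: $\sum_t\alpha_t^2<\infty$ and monotonicity give no control over $\sum_t\alpha_t$, and the theorem even assumes $\sum_t\alpha_t=\infty$. Under the prescribed schedule $\alpha_t=c_1/\sqrt{T}$ one has $\sum_{t\le T}\alpha_t=c_1\sqrt{T}$, so the drift contributes the non-vanishing constant $2c_1LG/c_2$ to your final bound. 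The first repair needs the inner step to be a descent direction for the outer loss, i.e.\ $\langle\nabla_\omega L_{\text{outer1}},\nabla_\omega L_{\text{inner}}\rangle\ge 0$ along the iterates, so that the $-\alpha_t\langle\cdot,\cdot\rangle$ term can be discarded and only $\mathcal{O}(\alpha_t^2)$ terms remain. Nothing in the hypotheses gives this, and it should not be expected here: $L_{\text{inner}}$ is evaluated on one dataset, while $L_{\text{outer1}}$ is evaluated on the other two and also contains $\lambda_2 L_{reg}$.

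The paper runs into the same term. Its drift bound is $\frac{L\alpha_t^2\rho^2}{2}+\alpha_t\rho\|\nabla_\omega L_{\text{inner}}(\omega_t;\theta_t)\|_2+\frac{L\sigma^2\alpha_t^2}{2}$ plus zero-mean terms, with $\rho$ a gradient-norm bound, obtained by Cauchy--Schwarz on exactly the inner product above. It removes the first-order piece only by asserting, in its final step, that $\sum_{t=1}^T\alpha_t\|\nabla_\omega L_{\text{inner}}(\omega_t;\theta_t)\|_2$ stays bounded as $T\to\infty$. This does not follow from the stated hypotheses and is in effect an additional assumption (it requires the inner-loop gradients to decay). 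To complete your argument you must state that assumption, or the alignment condition, explicitly; with it, your proof and the paper's coincide.

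Two smaller points. First, your Step 1 is unnecessary: the paper reads the hypothesis on the $\theta$-gradient directly as $L$-smoothness of $\theta\mapsto L_{\text{outer1}}(\omega^*_{t+1}(\theta))$. If you keep $\tilde L\le 2L+\rho/L$, then $\beta_t\le 1/L$ no longer makes $1-\tilde L\beta_t/2$ positive, so you would need $T\ge(c_2\tilde L)^2$. Second, Lipschitz gradients do not bound gradient norms, so your $G$ and $\sigma^2$ need a bounded-gradient assumption, which the paper also makes tacitly through $\rho$.
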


\begin{Theorem} \label{th2}
Under the conditions of Theorem ~\ref{th1} and the gradient of $\phi$ with respect to the loss function $L_{\text{outer2}}$ is Lipschitz continuous with constant $L$, DuMeta++ can achieve $\mathbb{E}[ \|\nabla L_{\text{outer2}}(\omega_t(\phi_t))\|_2^2] \leq \epsilon$ in $\mathcal{O}(1/\epsilon^2)$ steps. More specifically,
	\begin{align}
		\min_{0\leq t \leq T} \mathbb{E}\left[ \left\|\nabla L_{\text{outer2}}(\omega_t(\phi_t))\right\|_2^2\right] \leq \mathcal{O}(\frac{C}{\sqrt{T}}),
	\end{align}
	where $C$ is a constant independent of the convergence process.
\end{Theorem}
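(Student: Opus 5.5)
The argument copies the proof of Theorem~\ref{th1}, with the meta-initialization $\phi$ in place of $\theta$ and $L_{\text{outer2}}$ in place of $L_{\text{outer1}}$. It is the standard nonconvex SGD descent-lemma argument used for meta-weight/bilevel schemes. Throughout, $E_\theta(\cdot)$ is held fixed during the MIL outer loop. Write $\mathcal{G}(\phi_t) := \nabla_\phi L_{\text{outer2}}(\omega^*_{t+1}(\phi_t))$ for the true hypergradient and $\hat{\mathcal{G}}_t$ for its mini-batch estimate. By Eq.~(\ref{eq:9})--(\ref{eq:10}),
\[
\hat{\mathcal{G}}_t=\Big(\mathbf{I}-\alpha_t\nabla^2_{\phi}L_{\text{inner}}\Big)^{\top}\nabla_{\omega}L_{\text{outer2}}(\omega^*_{t+1}).
\]
I would assume, as is implicit in the statement, that it is unbiased, $\mathbb{E}[\hat{\mathcal{G}}_t]=\mathcal{G}(\phi_t)$, with variance bounded by $\sigma^2$. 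I would also assume the stochastic gradients of the losses are bounded by a constant $\rho$; Dice+CE on a bounded network head makes this reasonable.

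\textbf{Step 1: one-step change of the loss.} I split the change of $L_{\text{outer2}}$ into two parts. The first comes from the inner update of $\omega$, i.e., $\omega^*_{t+1}(\phi_{t+1})$ versus $\omega^*_{t}(\phi_t)$, with $\phi$ held. The second comes from the outer update $\phi_{t+1}=\phi_t-\beta_t\hat{\mathcal{G}}_t$.

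For the outer update, Lipschitz smoothness of $L_{\text{outer2}}$ in $\phi$ (constant $L$, as assumed) gives the descent lemma
\[
L(\phi_{t+1})\le L(\phi_t)-\beta_t\langle \mathcal{G}(\phi_t),\hat{\mathcal{G}}_t\rangle+\tfrac{L\beta_t^2}{2}\|\hat{\mathcal{G}}_t\|^2 .
\]
For the inner update, the map $\phi\mapsto\omega^*$ equals identity minus $\alpha_t$ times a gradient. Its Jacobian norm is therefore at most $1+\alpha_t L$. Smoothness then bounds the inner contribution by $\alpha_t\rho^2+\tfrac{L\alpha_t^2}{2}\rho^2$, up to constants.

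\textbf{Step 2: take expectations and telescope.} Taking expectations and using unbiasedness gives
\[
\beta_t\mathbb{E}\|\mathcal{G}(\phi_t)\|^2\le \mathbb{E}[L_t-L_{t+1}]+\alpha_t\rho^2+\tfrac{L}{2}(\alpha_t^2+\beta_t^2)(\rho^2+\sigma^2).
\]
Summing over $t=1,\dots,T$ and using $L\ge 0$, I obtain
\[
\min_t\mathbb{E}\|\mathcal{G}\|^2\le \frac{L_1+\rho^2\sum_t\alpha_t+\tfrac{L}{2}(\rho^2+\sigma^2)\sum_t(\alpha_t^2+\beta_t^2)}{\sum_t\beta_t}.
\]
Now insert the step-size choices $\alpha_t=\min\{1/L,c_1/\sqrt T\}=c_1/\sqrt T$ and $\beta_t=c_2/\sqrt T$, which hold under $\sqrt{T}/c_i\ge L$. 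Then $\sum_t\beta_t=c_2\sqrt T$ and $\sum_t(\alpha_t^2+\beta_t^2)=c_1^2+c_2^2$.

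\textbf{Step 3: the main obstacle.} The term $\rho^2\sum_t\alpha_t$ grows like $\sqrt T$, so a naive bound gives only $O(1)$ rather than $O(1/\sqrt T)$. I would handle it as in the meta-weight-net analyses. The inner step on $\omega$ is itself a gradient step on the inner loss. Its effect on $L_{\text{outer2}}$ can therefore be written as $-\alpha_t\langle\nabla_\omega L_{\text{outer2}},\nabla_\omega L_{\text{inner}}\rangle$ plus an $O(\alpha_t^2)$ remainder. Under the meta-learning premise that the tasks are aligned, so the inner product is nonnegative in expectation, this term can be dropped.

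If that alignment cannot be justified, the fallback is to use the Robbins--Monro conditions $\sum\alpha_t^2<\infty$, which are listed in the hypotheses. These force the inner perturbation to be summable in the squared term. The linear term would then be absorbed by the first-order (Reptile) approximation, in which the MIL update reduces to an SGD step on $\phi$ of the composed objective $L_{\text{outer2}}(\phi-\alpha\nabla L_{\text{inner}}(\phi))$. Smoothness of that composed objective is inherited with constant $L(1+\alpha L)^2\le 4L$ because $\alpha\le 1/L$. This is the cleanest route, and I would try it first.

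With the composed objective, the bound becomes $\big(L_1+2L(\rho^2+\sigma^2)(c_1^2+c_2^2)\big)/(c_2\sqrt T)=O(C/\sqrt T)$. Setting this equal to $\epsilon$ gives $T=O(1/\epsilon^2)$. Finally, the Lipschitz continuity of the $\phi$-gradient of $L_{\text{outer2}}$, assumed in the statement, transfers the bound from $\mathcal{G}(\phi_t)$ to $\nabla L_{\text{outer2}}(\omega_t(\phi_t))$ up to a constant.
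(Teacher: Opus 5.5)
Your Steps~1--2 match the paper's proof. The paper splits $L_{\text{outer2}}(\omega^*_{t+1}(\phi_{t+1}))-L_{\text{outer2}}(\omega^*_t(\phi_t))$ into two pieces. One is a $\phi$-step term, handled by the descent lemma from the assumed Lipschitz $\phi$-gradient. The other is an inner $\omega$-drift term, bounded by $\frac{L\alpha_t^2(\rho^2+\sigma^2)}{2}+\alpha_t\rho\|\nabla_\omega L_{\text{inner}}(\omega_t;\phi_t)\|_2$ plus zero-mean noise. It then telescopes and divides by $\sum_t\beta_t\ge T\beta_T$. The two routes part at your Step~3. The paper handles $\rho\sum_t\alpha_t\|\nabla_\omega L_{\text{inner}}(\omega_t;\phi_t)\|_2$ by asserting that it stays finite as $T\to\infty$. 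You correctly note that with $\alpha_t=c_1/\sqrt T$ and non-vanishing inner gradients this sum grows like $\sqrt T$, which gives only $O(1)$. Your composed-objective route removes the term instead of assuming it away. MIL resets $\omega_{t-1}=\phi_{t-1}$ every iteration, so the exact update (\ref{eq:11}) is a plain stochastic gradient step on $F(\phi)=\mathbb{E}_{\Psi,\Xi}\big[L_{\text{outer2}}(\phi-\alpha\nabla_\omega L_{\text{inner}}(\phi;\Psi);\Xi)\big]$. Unbiasedness is then automatic rather than assumed, and no drift term arises. The standard bound $\big(F(\phi_1)-\inf F+\frac{L\sigma^2}{2}\sum_t\beta_t^2\big)/\sum_t\beta_t(1-\frac{L}{2}\beta_t)=O(1/\sqrt T)$ finishes the proof. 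You need neither the Robbins--Monro conditions nor your alignment hypothesis, which is not in the statement anyway. Your version gives a rate that does not depend on an unproved summability claim. The paper's version instead reuses the template of Theorem~\ref{th1}, where the inner step starts from a moving $\omega_t$ rather than from the outer variable, so a drift term cannot be avoided there. To make your version airtight, four repairs are needed. First, take $L$-smoothness of $F$ directly from the hypothesis on the $\phi$-gradient, as the paper does. Your derived constant $L(1+\alpha L)^2$ omits the $\alpha\,\mathrm{Lip}(\nabla^2L_{\text{inner}})\,\rho$ contribution. Second, it is the second-order update (\ref{eq:11}), not the Reptile approximation, that is exactly SGD on $F$. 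The first-order version has bias at most $\alpha_tL\rho$, which costs only $O(\alpha_t^2)$ via Young's inequality. Third, drop the final ``transfer'': the paper's conclusion is about this hypergradient itself. Fourth, state explicitly that $E_\theta$ stays frozen across MIL iterations, as in Sec.~\ref{subsubsec:mil}. If $\theta$ is updated inside the same loop, as in Algorithm~\ref{alg1}, then $F$ drifts with $\theta_t$ and an $O(\sum_t\beta_t)$ term of the same kind returns; the paper's proof also ignores this.
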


\section{Experiment}
\subsection{Experimental Setup}
\subsubsection{Datasets} 
We assess the performance of our approach on the task of brain tissue segmentation in two particularly challenging age groups---infant and aging subjects---under a few-shot segmentation scenario. Specifically, the model (comprising both a feature extractor and a segmentation head) is meta-trained on three publicly available datasets: OASIS3, IBIS12M, and IBIS24M. It is then evaluated on two other public datasets, i.e., ADNI and iSeg-2019.

Pre-processing of T1-weighted (T1w) MRIs involves several steps: skull stripping, bias field correction, and intensity normalization. For high-quality pseudo tissue labels during meta-training, we apply the automated iBEAT pipeline~\cite{wang2023ibeat} to these pre-processed images. In the meta-test phase, iBEAT is similarly applied, followed by expert manual refinements to produce gold-standard annotations. Each meta-training dataset is divided into training and validation subsets at an 80--20 ratio, with the validation partition employed for model and hyperparameter selection. After completing meta-training, we pick one training subject from each meta-test dataset to fine-tune the segmentation head, and we report performance metrics on the respective test sets. 

The datasets used in this study are briefly summarized:
\begin{itemize}
\item \textbf{OASIS3}~\cite{lamontagne2019oasis}: This public dataset comprises 1,639 T1w MRIs from 992 individuals, each with 1 to 5 scans captured within a 5-year window. Participants include patients diagnosed with AD, healthy controls, and mild cognitive impairments, spanning the ages of 42 to 95.
\item \textbf{ADNI}~\cite{mueller2005Alzheimer}: A multi-site dataset of 2,389 longitudinal T1w MRIs (with at least two visits per subject). It covers healthy controls, individuals who may progress to AD, and AD patients, ages ranging from 20 to 90.
\item \textbf{IBIS}~\cite{hazlett2017early}: An infant imaging study that includes 1,272 T1w/T2w MRIs from 552 babies between 3 and 36 months of age, encompassing both typically developing infants and those at risk of autism spectrum disorder (ASD). We specifically focus on the 12-month (IBIS12M) and 24-month (IBIS24M) groups for meta-training.
\item \textbf{iSeg-2019}~\cite{9339962}: This MICCAI 2019 grand challenge dataset focuses on 6-month-old infant brain segmentation from multiple clinical sites. The training subset provides 10 subjects, while the test set contains 13. Each subject has both T1w and T2w scans. Segmenting 6-month-old brains is notably difficult due to the isointense phase, in which gray and white matter exhibit overlapping intensity ranges, especially near the cortical surface. 
%This overlap poses a significant barrier to accurate tissue classification.
\end{itemize}

% Please add the following required packages to your document preamble:
% \usepackage{multirow}
\begin{table*}[]
\centering
\caption{One-shot segmentation results on the iSeg-2019 dataset.}
\label{tab:comp_iSeg2019_1shot}
\resizebox{\textwidth}{!}{
\begin{tabular}{l|ccc|ccc}
\hline
\multirow{2}{*}{Exp}           & \multicolumn{3}{c|}{Dice↑}                       & \multicolumn{3}{c}{ASD↓}                         \\ \cline{2-7} 
                               & CSF            & GM             & WM             & CSF            & GM             & WM             \\ \hline
% Fully-supervised               & 0.9664±0.0055  & 0.9370±0.0086  & 0.9190±0.0122  & 0.1056±0.0163  & 0.2877±0.0424  & 0.3285±0.0482  \\
RandInitUnet.2D                & 0.8767±0.0113 & 0.8402±0.0220 & 0.7965±0.0035 & 0.3317±0.0215 & 0.5578±0.0825 & 0.8479±0.1930 \\
RandInitUnet.3D                & 0.9029±0.0066 & 0.8616±0.0176 & 0.8200±0.0071 & 0.2652±0.0162 & 0.5497±0.0658 & 0.6487±0.1324 \\
Context Restore ~\cite{chen2019self}  & 0.9070±0.0065 & 0.8615±0.0118 & 0.8245±0.0155 & 0.2641±0.0239 & 0.5420±0.0560 & 0.6298±0.1068 \\
LNE ~\cite{ouyang2021self}            & 0.9315±0.0115 & 0.8937±0.0147 & 0.8654±0.0141 & 0.1753±0.0163 & 0.4482±0.0373 & 0.5274±0.0710 \\
GLCL ~\cite{chaitanya2020contrastive} & 0.9141±0.0103 & 0.8718±0.0185 & 0.8337±0.0088 & 0.2337±0.0147 & 0.5017±0.0546 & 0.6596±0.1024 \\
PCL ~\cite{zeng2021positional}        & 0.9139±0.0089 & 0.8688±0.0148 & 0.8313±0.0133 & 0.2326±0.0130 & 0.5161±0.0484 & 0.6692±0.0943 \\
PatchNCE ~\cite{park2020contrastive}  & 0.9444±0.0096 & 0.9043±0.0081 & 0.8782±0.0223 & 0.1360±0.0153 & 0.3820±0.0376 & 0.4520±0.0760 \\
MAML ~\cite{finn2017model} & 0.9433±0.0107                    & 0.9025±0.0100                    &  0.8768±0.0185                     & 0.1744±0.0139                    & 0.3931±0.0493                    & 0.5374±0.0859 \\
Reptile ~\cite{nichol2018first} & 0.9450±0.0129                    & 0.9047±0.0121                    &  0.8777±0.0143                     & 0.1173±0.0123                    &  0.3896±0.0451                    & 0.5324±0.0825 \\
LSOR ~\cite{ouyang2023lsor}                           & 0.9484±0.0107  & 0.9075±0.0098  & 0.8822±0.0182  & 0.1717±0.0143  & 0.3901±0.0489  & 0.5339±0.0861  \\
LSRL ~\cite{ren2022local}                           & 0.9508±0.0125  & 0.9102±0.0117  & 0.8823±0.0141  & 0.1147±0.0124  & 0.3860±0.0449  & 0.5294±0.0821  \\
DuMeta ~\cite{sun2023dual}                         & 0.9611±0.0059  & 0.9313±0.0083  & 0.9145±0.0126  & 0.1082±0.0158  & 0.2916±0.0423  & 0.3318±0.0483  \\
DuMeta++                      & \textbf{0.9665±0.0059}  & \textbf{0.9364±0.0080}  & \textbf{0.9196±0.0130}  & \textbf{0.1043±0.0162}  & \textbf{0.2887±0.0426}  & \textbf{0.3291±0.0479}  \\ \hline
\end{tabular}
}
\end{table*}

\subsubsection{Implementation Details} Our approach builds upon a 3D U-Net architecture~\cite{ronneberger2015u} with five down-sampling and five up-sampling blocks in the encoder and segmentation head, respectively. We employ InstanceNorm3d for normalization and ReLU as the activation function. Training is carried out using an SGD optimizer augmented with Nesterov momentum ($\mu = 0.99$) and a polynomial decay schedule, starting with a base learning rate of 0.01. A weight decay of $3 \times 10^{-5}$ is used to mitigate overfitting.
We crop the input MRI data into patches of size $128 \times 128 \times 128$, and train the model in mini-batches of two. To enhance multi-scale supervision, we introduce auxiliary outputs at five different scales, assigning scale-dependent weights of 0.0625, 0.125, 0.25, 0.5, and 1.0, from coarsest to finest level, respectively. Image augmentation broadly follows the nnU-Net pipeline~\cite{isensee2021nnu}, applying a variety of transformations to bolster generalization.
For performance evaluation, we track both the Dice similarity coefficient and the Average Symmetric Surface Distance (ASD). 
%A higher Dice score indicates improved spatial overlap between predictions and ground truth, while a lower ASD value reflects more accurate boundary alignment between the two.

\subsection{Comparison Results} 
We evaluate our method against approaches designed for longitudinally consistent representation learning, as well as meta-learning frameworks tailored to few-shot scenarios.
%需要简要地总结这些对比方法
These comepting methods include the pretext-task–based {Context Restore}~\cite{chen2019self}; contrastive learning-based {GLCL}~\cite{chaitanya2020contrastive}, {PCL}~\cite{zeng2021positional} and {PatchNCE}~\cite{park2020contrastive}; meta-learning baselines {MAML}~\cite{finn2017model} and {Reptile}~\cite{nichol2018first} for rapid adaptation in few-shot settings; and longitudinally consistent representation learning approaches {LNE}~\cite{ouyang2021self}, {LSOR}~\cite{ouyang2023lsor}, and {LSRL}~\cite{ren2022local}, which explicitly encourage longitudinal consistency for scans from the same subject.
All methods are pretrained and then finetuned with the segmentation task, and they share the same network backbone to ensure fairness. 

We assess performance on the ADNI and iSeg-2019 datasets under a challenging few-shot configuration during meta-testing. Both datasets pose substantial domain shifts due to progressive neurodegeneration or neurodevelopment.

\subsubsection{One-Shot Segmentation on iSeg-2019} The iSeg-2019 dataset comprises MRI scans of six-month-old infants from multiple sites, where the gray matter (GM) and white matter (WM) contrasts differ substantially from those of other age groups. To address this, our DuMeta++ model was meta-trained (pre-trained) using a 3D U-Net on IBIS12M and IBIS24M (corresponding to infants at 12 and 24 months), as well as OASIS3 (elderly subjects). The resulting model was then meta-tested (fine-tuned) on iSeg-2019, even though no six-month-old data had been used during meta-training. During meta-test, we froze the encoder and updated only the segmentation head with one sample from iSeg-2019.

Quantitative outcomes against competing methods are compiled in Table~\ref{tab:comp_iSeg2019_1shot}. As is evident, DuMeta++ achieves a clear performance margin, suggesting strong generalization to an unfamiliar time point. To complement these metrics, we provide qualitative examples in Fig.~\ref{fig:comp_iSeg}, which demonstrate how our approach captures structural details, notably cortical folding patterns, more accurately than the baselines.

\subsubsection{One-Shot Segmentation on ADNI} We further evaluated the pre-trained 3D U-Net on the ADNI dataset. Table~\ref{tab:comp_ADNI_1shot} and Fig.~\ref{fig:comp_adni} present the quantitative and qualitative results, respectively. Consistent to iSeg-2019, DuMeta++ outperforms other competitors when segmenting the aging brain, reinforcing its strong capability to handle age-related domain shifts.

Notably, other methods largely rely on self-supervision without ground-truth annotations, whereas our framework also does not require manual labels at meta-training, instead drawing on pseudo-labels generated via iBEAT in a semi-supervised manner. Incorporating pseudo labels helps prevent the mode collapse often observed in pure contrastive learning: due to U-Net skip connections, a self-supervised model can degenerate into trivial identity mappings that fail to encode useful semantics. In contrast, our cross-sectional strategy does not demand longitudinally paired samples, thus offering a practical solution for real-world datasets where longitudinal consistency is challenging to obtain.

% Please add the following required packages to your document preamble:
% \usepackage{multirow}
\begin{table*}[]
\centering
\caption{One-shot segmentation results on the ADNI dataset.}
\label{tab:comp_ADNI_1shot}
\resizebox{\textwidth}{!}{
\begin{tabular}{l|ccc|ccc}
\hline
\multirow{2}{*}{Exp}           & \multicolumn{3}{c|}{Dice↑}                                                  & \multicolumn{3}{c}{ASD↓}                                                    \\ \cline{2-7} 
                               & CSF                     & GM                      & WM                      & CSF                     & GM                      & WM                      \\ \hline
% Fully-supervised               & 0.9813±0.0021          & 0.9684±0.0033          & 0.9800±0.0023          & 0.0215±0.0040          & 0.0307±0.0051          & 0.0314±0.0054          \\
RandInitUnet.2D                & 0.9223±0.0084 & 0.9051±0.0098 & 0.9361±0.0096 & 0.1571±0.0422 & 0.1573±0.0338 & 0.2059±0.0909 \\
RandInitUnet.3D                & 0.9459±0.0062 & 0.9236±0.0073 & 0.9495±0.0058 & 0.0933±0.0205 & 0.1160±0.0297 & 0.1226±0.0291 \\
Context Restore ~\cite{chen2019self}  & 0.9500±0.0058 & 0.9289±0.0069 & 0.9527±0.0054 & 0.0821±0.0179 & 0.1056±0.0267 & 0.1101±0.0241 \\
LNE ~\cite{ouyang2021self}            & 0.9665±0.0043 & 0.9456±0.0065 & 0.9637±0.0047 & 0.0487±0.0107 & 0.0711±0.0168 & 0.0760±0.0180 \\
GLCL ~\cite{chaitanya2020contrastive} & 0.9531±0.0055 & 0.9332±0.0069 & 0.9559±0.0052 & 0.0736±0.0151 & 0.0964±0.0238 & 0.1018±0.0236 \\
PCL ~\cite{zeng2021positional}        & 0.9526±0.0057 & 0.9314±0.0068 & 0.9547±0.0053 & 0.0750±0.0137 & 0.0995±0.0255 & 0.1059±0.0273 \\
PatchNCE ~\cite{park2020contrastive}  & 0.9716±0.0040 & 0.9520±0.0060 & 0.9687±0.0043 & 0.0402±0.0091 & 0.0591±0.0118 & 0.0634±0.0157 \\ 
MAML ~\cite{finn2017model} & 0.9680±0.0062                    & 0.9472±0.0070                    & 0.9647±0.0061                     & 0.0460±0.0197                    & 0.0680±0.0239                    & 0.0736±0.0281 \\
Reptile ~\cite{nichol2018first} & 0.9687±0.0056                    & 0.9476±0.0071                    &  0.9656±0.0049                     & 0.0452±0.0167                    & 0.0666±0.0163                    &  0.0702±0.0176 \\
LSOR ~\cite{ouyang2023lsor}                          & 0.9738±0.0067           & 0.9526±0.0074           & 0.9697±0.0064           & 0.0406±0.0194           & 0.0628±0.0234           & 0.0680±0.0286           \\
LSRL ~\cite{ren2022local}                          & 0.9744±0.0071           & 0.9535±0.0075           & 0.9706±0.0048           & 0.0398±0.0171           & 0.0621±0.0158           & 0.0645±0.0180           \\
DuMeta  ~\cite{sun2023dual}                       & 0.9809±0.0021          & 0.9678±0.0034          & 0.9796±0.0024          & 0.0222±0.0042          & 0.0315±0.0052          & 0.0322±0.0054          \\
DuMeta++                      & \textbf{0.9869±0.0024} & \textbf{0.9731±0.0036} & \textbf{0.9847±0.0023} & \textbf{0.0204±0.0046} & \textbf{0.0301±0.0057} & \textbf{0.0307±0.0051} \\ \hline
\end{tabular}
}
\end{table*}

\begin{figure*}[t]
  \centering
% \fbox{\rule{0pt}{2in} \rule{0.9\linewidth}{0pt}}
   \includegraphics[width=0.99\linewidth]{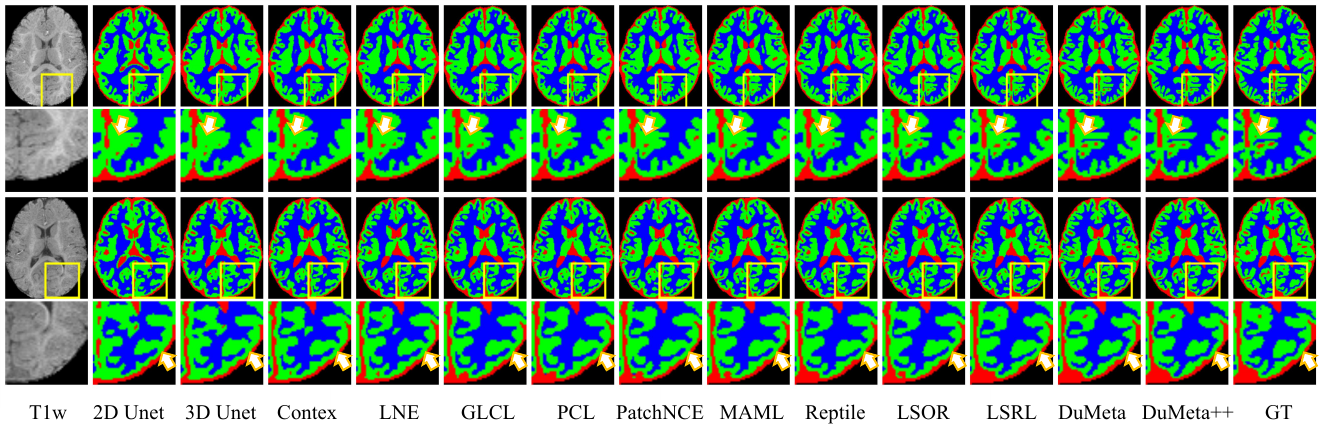}
   \caption{The 2D slice views of representative one-shot segmentation results on the held-out test set of iSeg-2019.}
\label{fig:comp_iSeg}
\end{figure*}

\begin{figure*}[t]
  \centering
% \fbox{\rule{0pt}{2in} \rule{0.9\linewidth}{0pt}}
   \includegraphics[width=0.99\linewidth]{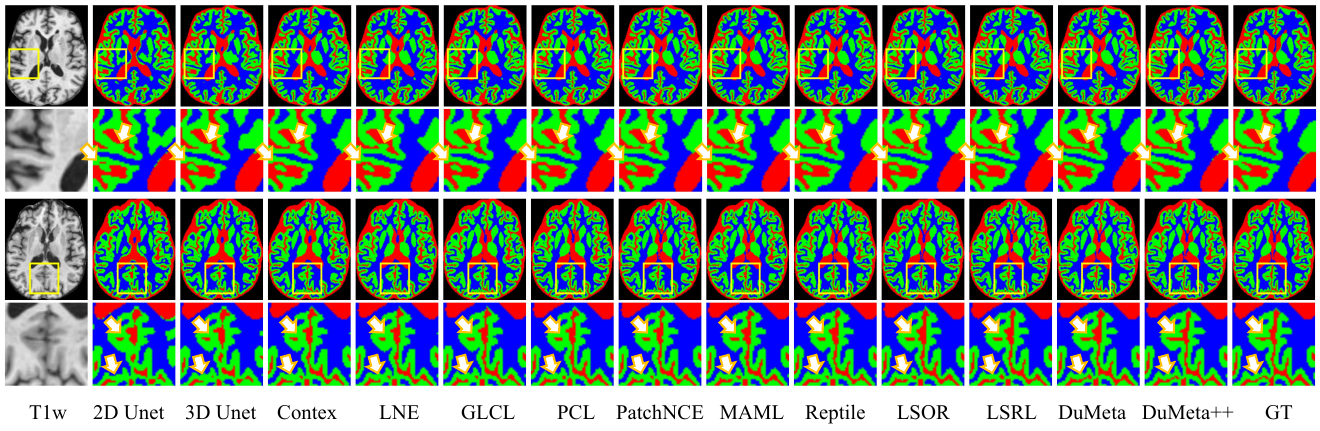}
   \caption{The 2D slice views of representative one-shot segmentation results on the held-out test set of ADNI.}
\label{fig:comp_adni}
\end{figure*}

\begin{figure}[t]
  \centering
% \fbox{\rule{0pt}{2in} \rule{0.9\linewidth}{0pt}}
   \includegraphics[width=0.99\linewidth]{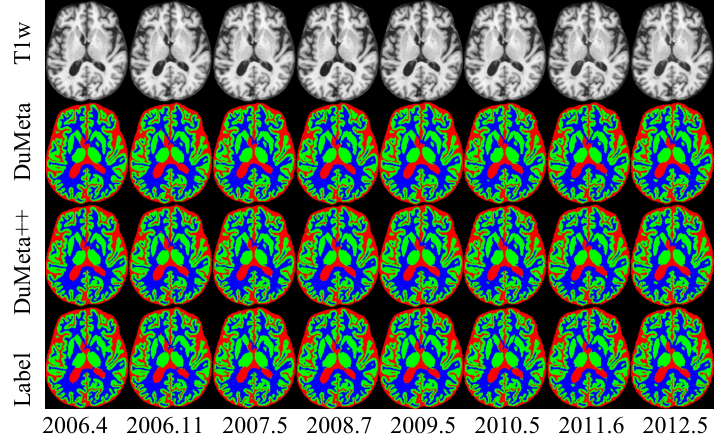}
   \caption{Visualization of segmentation results of the same subject at multiple time points on ADNI.}
\label{fig:comp_long}
\end{figure}

\subsubsection{Five-Shot Segmentation} We further evaluated the five-shot setting on the ADNI dataset using the same pre-trained 3D U-Net, with the corresponding quantitative results detailed in Table~\ref{tab:comp_ADNI_5shot}. Mirroring the trends observed in the one-shot experiments, DuMeta++ again achieved consistently higher accuracy for aging brain tissue segmentation, thereby reinforcing its ability to generalize effectively across the lifespan.

% Please add the following required packages to your document preamble:
% \usepackage{multirow}
\begin{table*}[]
\centering
\caption{Five-shot segmentation results on the ADNI dataset.}
\label{tab:comp_ADNI_5shot}
\resizebox{\textwidth}{!}{
\begin{tabular}{l|ccc|ccc}
\hline
\multirow{2}{*}{Exp}           & \multicolumn{3}{c|}{Dice↑}                       & \multicolumn{3}{c}{ASD↓}                         \\ \cline{2-7} 
                               & CSF            & GM             & WM             & CSF            & GM             & WM             \\ \hline
% Fully-supervised               & 0.9813±0.0021 & 0.9684±0.0033 & 0.9800±0.0023 & 0.0215±0.0040 & 0.0307±0.0051 & 0.0314±0.0054 \\
RandInitUnet.2D                & 0.9371±0.0073 & 0.9186±0.0077 & 0.9467±0.0064 & 0.1123±0.0252 & 0.1278±0.0313 & 0.1361±0.0291 \\
RandInitUnet.3D                & 0.9552±0.0054 & 0.9344±0.0068 & 0.9566±0.0051 & 0.0693±0.0126 & 0.0932±0.0238 & 0.0985±0.0247 \\
Context Restore~\cite{chen2019self}  & 0.9559±0.0054 & 0.9365±0.0066 & 0.9583±0.0051 & 0.0687±0.0165 & 0.0899±0.0242 & 0.0936±0.0219 \\
LNE~\cite{ouyang2021self}            & 0.9708±0.0040 & 0.9511±0.0061 & 0.9681±0.0043 & 0.0414±0.0094 & 0.0609±0.0132 & 0.0641±0.0145 \\
GLCL~\cite{chaitanya2020contrastive} & 0.9622±0.0047 & 0.9418±0.0064 & 0.9615±0.0046 & 0.0558±0.0108 & 0.0781±0.0174 & 0.0824±0.0179 \\
PCL~\cite{zeng2021positional}        & 0.9575±0.0054 & 0.9370±0.0069 & 0.9581±0.0051 & 0.0648±0.0121 & 0.0878±0.0208 & 0.0949±0.0245 \\
PatchNCE~\cite{park2020contrastive}  & 0.9741±0.0038  & 0.9553±0.0060  & 0.9708±0.0042  & 0.0405±0.0087 & 0.0591±0.0112 & 0.0641±0.0160 \\
MAML ~\cite{finn2017model}                          & 0.9717±0.0066  & 0.9500±0.0074  & 0.9677±0.0058  & 0.0434±0.0194  & 0.0609±0.0123  & 0.0709±0.0281  \\
Reptile ~\cite{nichol2018first}                       & 0.9721±0.0057  & 0.9504±0.0072  & 0.9694±0.0045  & 0.0419±0.0165  & 0.0640±0.0164  & 0.0670±0.0180  \\
LSOR ~\cite{ouyang2023lsor}                          & 0.9771±0.0064  & 0.9566±0.0077  & 0.9735±0.0068  & 0.0379±0.0189  & 0.0591±0.0235  & 0.0647±0.0283  \\
LSRL ~\cite{ren2022local}                          & 0.9774±0.0070  & 0.9568±0.0074  & 0.9744±0.0047  & 0.0369±0.0173  & 0.0596±0.0161  & 0.0615±0.0177  \\
DuMeta  ~\cite{sun2023dual}                       & 0.9842±0.0026  & 0.9704±0.0029  & 0.9827±0.0027  & 0.0205±0.0038  & 0.0306±0.0053  & 0.0303±0.0053  \\
DuMeta++                      & \textbf{0.9900±0.0028}  & \textbf{0.9767±0.0033}  & \textbf{0.9872±0.0024}  & \textbf{0.0190±0.0042}  & \textbf{0.0292±0.0062}  & \textbf{0.0296±0.0056}  \\ \hline
\end{tabular}
}
\end{table*}

\subsubsection{Evaluation of Longitudinal Consistency} We further examined how consistently each method performs over time on ADNI by employing spatio-temporal consistency of segmentation (STCS)~\cite{li2021longitudinal} and absolute symmetrized percent change (ASPC)~\cite{reuter2012within} as evaluation metrics. From Table~\ref{tab:comp_lc}, it is evident that DuMeta++ demonstrates superior longitudinal stability. To gain additional insights into the underlying representations, we visualized the meta-learned features using t-SNE, with examples in Fig.~\ref{fig:t-sne} indicating that DuMeta++ learned robust, time-invariant feature embeddings. Moreover, Fig.~\ref{fig:comp_long} shows segmentation outcomes from a single subject at multiple time points, illustrating how our approach consistently maintains accurate segmentation performance across different ages.

\begin{figure}
    \centering
    \includegraphics[width=0.8\linewidth]{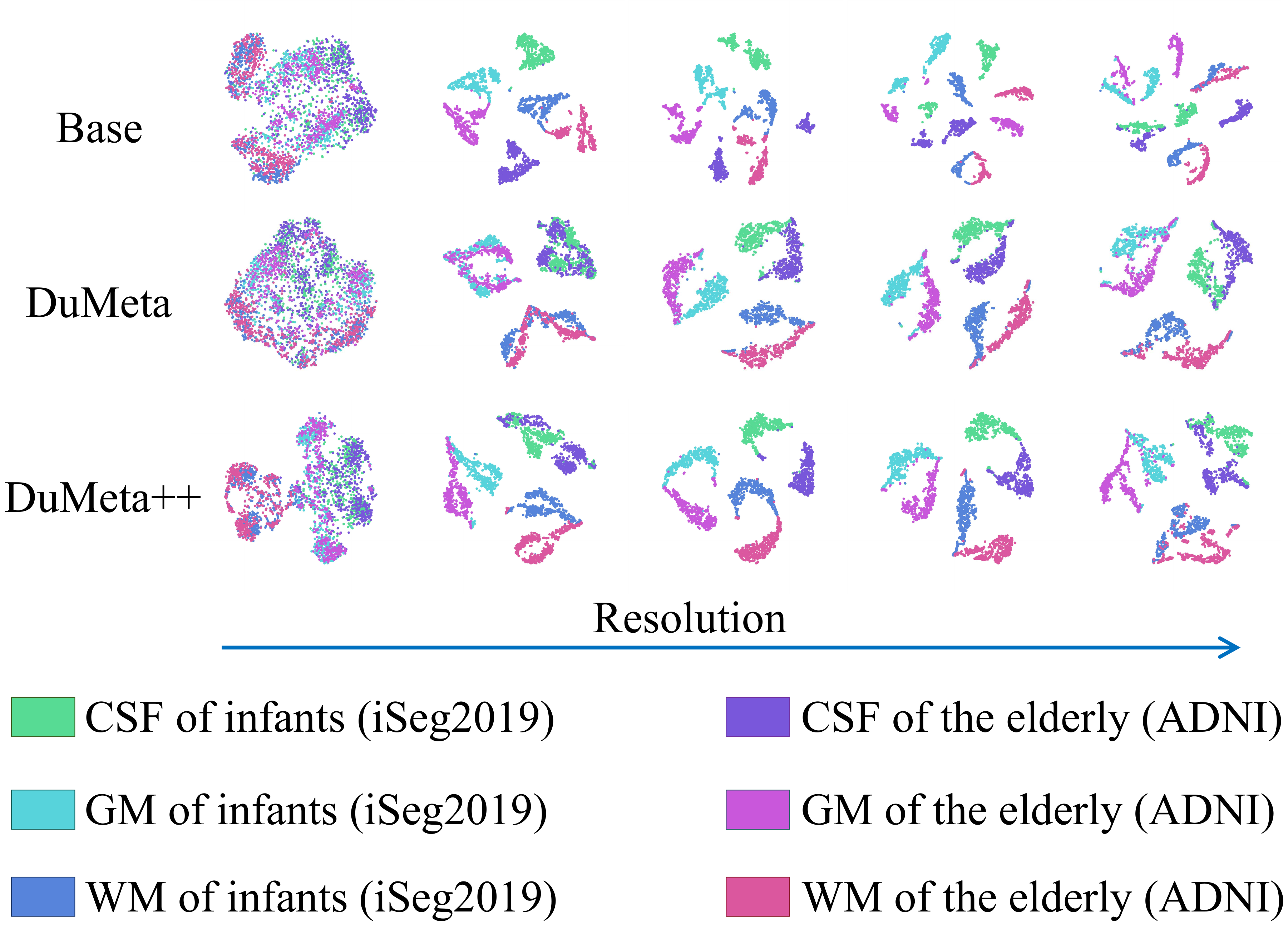}
    \caption{Visualizing meta-learned features for different age groups. The same row from left to right is the low-resolution to high-resolution features of the segmentation head.}
    \label{fig:t-sne}
\end{figure}

% Please add the following required packages to your document preamble:
% \usepackage{multirow}
\begin{table}[]
\centering
\caption{Evaluation of longitudinal consistency on the ADNI dataset.}
\label{tab:comp_lc}
\setlength{\tabcolsep}{4pt} 
\resizebox{\columnwidth}{!}{%
\begin{tabular}{l|ccc|ccc}
\hline
\multirow{2}{*}{Exp} & \multicolumn{3}{c|}{STCS↑} & \multicolumn{3}{c}{ASPC↓} \\ \cline{2-7} 
                     & CSF     & GM      & WM     & CSF    & GM      & WM     \\ \hline
% Fully-supervised     & 0.9075  & 0.8872  & 0.9197 & 4.94   & 5.90    & 4.07   \\
RandInitUnet.2D      & 0.8809          & 0.8584          & 0.8998          & 6.35                    & 8.32                   & 5.16                   \\
RandInitUnet.3D      & 0.9063          & 0.8862          & 0.9177          & 5.06                    & 6.02                   & 4.20                   \\
Context Restore ~\cite{chen2019self}     & 0.8386          & 0.8180          & 0.8580          & 9.15                    & 11.07                  & 8.08                   \\
LNE ~\cite{ouyang2021self}                 & 0.9015          & 0.8796          & 0.9132          & 4.78                    & 6.81                   & 4.49                   \\
GLCL ~\cite{chaitanya2020contrastive}                & 0.8956          & 0.8706          & 0.9057          & 5.18                    & 7.57                   & 4.75                   \\
PCL ~\cite{zeng2021positional}                 & 0.9155          & 0.8968          & 0.9275          & 3.98                    & 5.63                   & 3.13                   \\
PatchNCE ~\cite{park2020contrastive}            & 0.9250          & 0.9087          & 0.9356          & 3.80                    & 5.05                   & 2.94                   \\
MAML ~\cite{finn2017model}                & 0.9027  & 0.8812  & 0.9146 & 4.65   & 6.71    & 4.32   \\
Reptile ~\cite{nichol2018first}             & 0.8966  & 0.8722  & 0.9067 & 5.06   & 7.47    & 4.59   \\
LSOR ~\cite{ouyang2023lsor}                & 0.9171  & 0.8979  & 0.9287 & 3.84   & 5.51    & 2.99   \\
LSRL ~\cite{ren2022local}                 & 0.9264  & 0.9107  & 0.9368 & 3.65   & 4.85    & 2.76   \\
DuMeta ~\cite{sun2023dual}              & 0.9373  & 0.9222  & 0.9468 & 2.66   & 3.40    & 2.20   \\
DuMeta++            & \textbf{0.9393}  & \textbf{0.9231}  & \textbf{0.9474} & \textbf{2.56}   & \textbf{3.20}    & \textbf{2.02}   \\ \hline
\end{tabular}
}
\end{table}

\subsection{Ablation Studies} We carried out a set of ablation experiments on the iSeg-2019 training set, which includes 10 available labeled samples. In the meta-test phase, one of these samples was chosen for fine-tuning, two were assigned to validation, and the remaining two served as test subjects.

% Please add the following required packages to your document preamble:
% \usepackage{multirow}
\begin{table}[]
\caption{Ablation study of different components on the held-out test set of iSeg-2019.}
\label{tab:abla_component}
\resizebox{\columnwidth}{!}{%
\begin{tabular}{lcccc|ccc}
\hline
\multicolumn{1}{c}{\multirow{2}{*}{Exp}} & \multirow{2}{*}{Feature} & \multirow{2}{*}{Loss} & \multirow{2}{*}{MFL} & \multirow{2}{*}{MIL} & \multicolumn{3}{c}{Dice↑}                          \\ \cline{6-8} 
\multicolumn{1}{c}{}                     &                           &                           &                      &                      & CSF             & GM              & WM              \\ \hline
A                                        &                           &                           &                     &                      & 0.9155          & 0.8724          & 0.8375          \\
B                                        &                           &                           & $\checkmark$                    &                      & 0.9332          & 0.8927          & 0.8651          \\
C                                        &                           &                           & $\checkmark$                    & $\checkmark$                    & 0.9358          & 0.8955          & 0.8679          \\
D                                        & Prototype                         & Sum                          & $\checkmark$                    & $\checkmark$                    & 0.9456          & 0.9049          & 0.8777          \\
E                                        & Sample                          & Triplet                         & $\checkmark$                    & $\checkmark$                    & 0.9452          & 0.9017          & 0.8727          \\
F                                        & Prototype                         & Triplet                         & $\checkmark$                    & $\checkmark$                    & \textbf{0.9499} & \textbf{0.9092} & \textbf{0.8832} \\ \hline
\end{tabular}
}
\end{table}

\subsubsection{Role of DuMeta++ and Class-Aware Regularization} To investigate the importance of our proposed dual meta-learning strategy (DuMeta++) and the class-aware regularization terms, we systematically removed or modified these components to observe the impact on segmentation accuracy. Table~\ref{tab:abla_component} summarizes the results. Relative to the baseline (A in Table~\ref{tab:abla_component}), incorporating meta-feature learning (MFL, labeled B) led to a notable performance increase, underscoring its capacity to learn representations that generalize well across different ages. Further improvements were observed upon adding meta-initialization learning (MIL, labeled C).

In contrast, model D eliminates the max operator in Eq.~\ref{eq:triplet} and merely sums and subtracts distances, whereas model E substitutes sample-level features for prototypes. Both variations performed less effectively than our final model (F), which applies a memory-bank-based prototype approach and margin-based triplet loss. Hence, combining these two components (F) yields the best accuracy.

\begin{figure}
    \centering
    \includegraphics[width=0.8\linewidth]{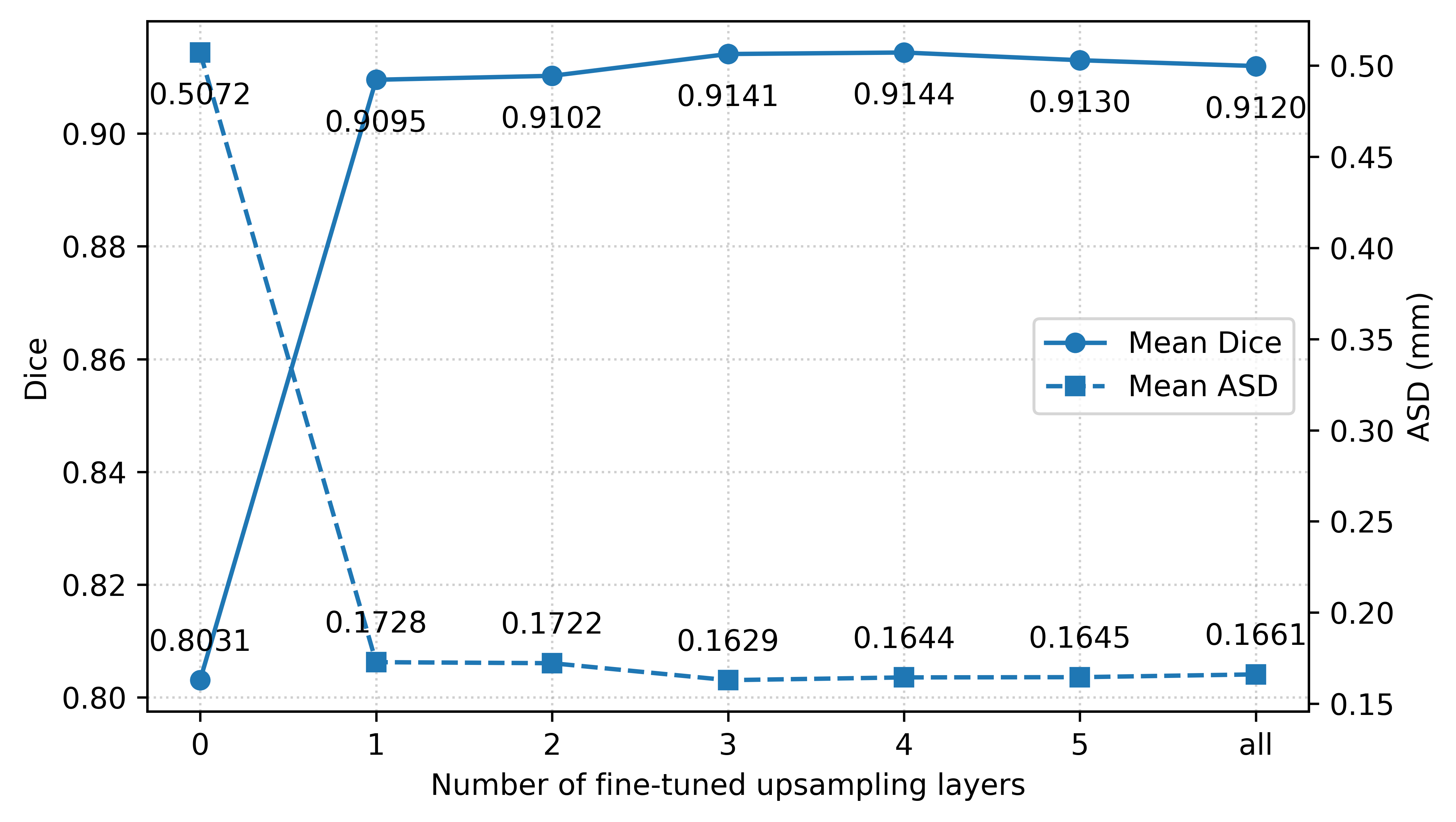}
    \caption{Ablation study of different base/meta-learner splits on the held-out test set of iSeg-2019.}
    \label{fig:abla_decoder}
\end{figure}

\subsubsection{Influence of Different Base/Meta-Learner Splits} We additionally examined how different divisions of the 3D U-Net into base learner (segmentation head) and meta-learner (feature extractor) affect performance. As shown in Fig.~\ref{fig:abla_decoder}, simply applying the pre-trained model (fine-tuning 0 layer) yields suboptimal outcomes. Fine-tuning all layers significantly boosts performance, indicating genuine domain gaps across the age spectrum. However, tuning only a small number of parameters (e.g., 1 upsampling layer) proved insufficient, likely because it restricts the model to a local, rather than global, optimum. Notably, fine-tuning the last three layers achieved the best compromise between ease of convergence and accuracy. It implies that maintaining most of the pre-trained encoder while focusing adaptation on a moderate subset of segmentation head layers yields strong, generalizable segmentation.

\begin{figure}
    \centering
    \includegraphics[width=0.8\linewidth]{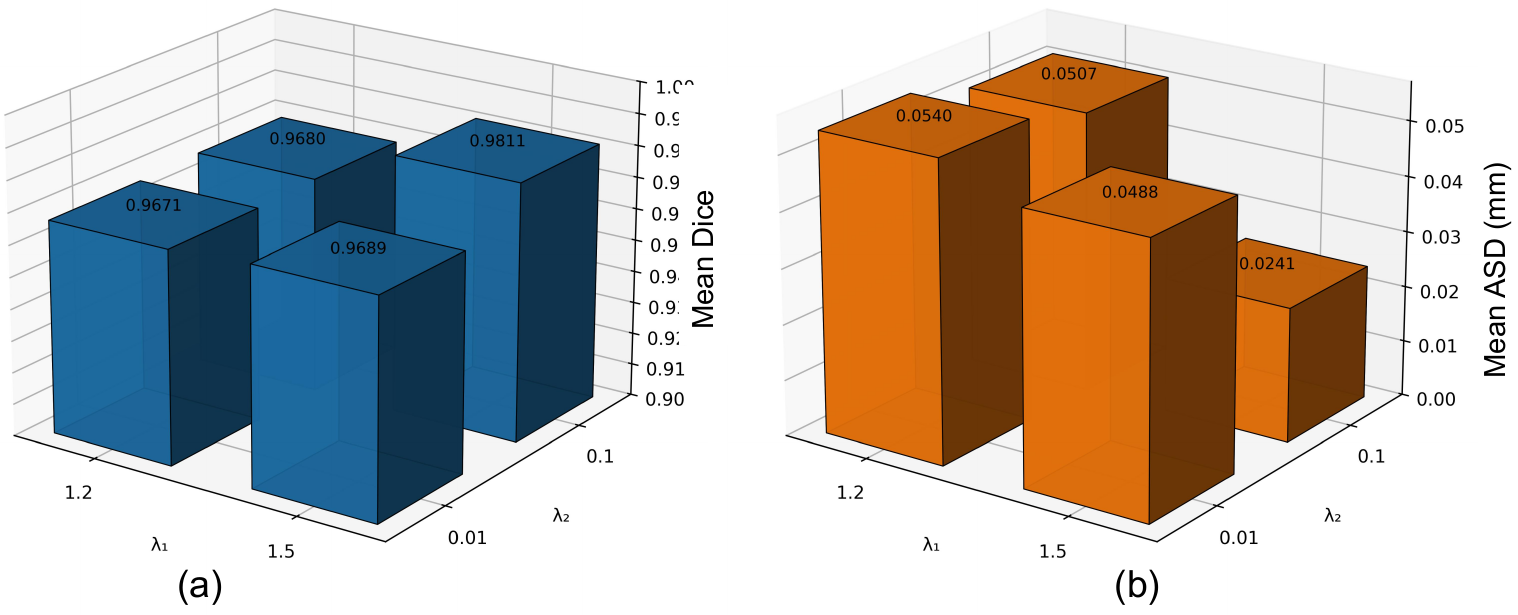}
    \caption{Ablation of hyperparameters $\lambda_1$ and $\lambda_2$ on the ADNI dataset. (a) Mean Dice; (b) Mean ASD.}
    \label{fig:abla_hyper}
\end{figure}

\subsubsection{Hyperparameter Sensitivity Analysis} Fig.~\ref{fig:abla_hyper} summarizes the performance variation on ADNI under different values of the triplet loss margin ($\lambda_1$) and the regularization weight ($\lambda_2$). Overall, setting $\lambda_1 = 1.5$ and $\lambda_2 = 0.1$ yields the highest mean Dice score (0.9811) of all tissue types, along with the lowest mean ASD values (0.0241). This configuration indicates that a moderately larger triplet margin ($\lambda_1$), 
together with a moderate regularization weight ($\lambda_2$), 
produces more precise boundaries and better class separation. By contrast, using a lower $\lambda_1$ (e.g., 1.2) or a smaller $\lambda_2$ (e.g., 0.01) generally worsens both metrics, suggesting that the balance between the learning dynamics and regularization strength is pivotal for stable and accurate segmentation.

% Please add the following required packages to your document preamble:
% \usepackage{multirow}
\begin{table}[]
\caption{Ablation study of memory bank capacity on the ADNI dataset.}
\label{tab:abla_capacity}
\resizebox{\columnwidth}{!}{%
\begin{tabular}{l|ccc|ccc}
\hline
\multirow{2}{*}{Capacity} & \multicolumn{3}{c|}{Dice↑}                          & \multicolumn{3}{c}{ASD↓}                            \\ \cline{2-7} 
                          & CSF             & GM              & WM              & CSF             & GM              & WM              \\ \hline
1                         & 0.9810          & 0.9679          & 0.9797          & 0.0220          & 0.0313          & 0.0321          \\
10                        & 0.9847          & 0.9710          & 0.9822          & \textbf{0.0200} & 0.0302          & 0.0323          \\
100                       & \textbf{0.9869} & 0.9731          & 0.9847          & 0.0204          & 0.0301          & 0.0307          \\
1000                      & 0.9867          & \textbf{0.9734} & \textbf{0.9849} & 0.0203          & \textbf{0.0293} & \textbf{0.0297} \\ \hline
\end{tabular}
}
\end{table}

\subsubsection{Memory Bank Capacity Test} We also explored how varying the capacity of the memory bank affects accuracy. As shown in Table~\ref{tab:abla_capacity}, while a capacity of 1000 attains the best scores on several entries, a capacity of 100 offers a strong accuracy-efficiency trade-off.

\section{Discussion \& Conclusion} 

While DuMeta++ demonstrates promising results, several limitations warrant future investigation. The reliance on pseudo-labels during meta-training, though practical, may introduce noise and cohort biases; mitigating strategies such as noise-aware training and periodic prototype auditing could enhance robustness. Furthermore, the prototype-centric regularization with a fixed memory bank may not fully encapsulate intra-class variability, indicating potential for multi-prototype or adaptively margined formulations. The current few-shot protocol could be extended with lightweight encoder adjustments or test-time adaptation strategies to handle more severe domain shifts. Finally, while our experiments validate the framework on T1-weighted MRI for three tissue classes across several datasets, future work should assess its generality on multi-contrast data, diverse populations, and clinically oriented endpoints to better establish its translational readiness.

In conclusion, we have presented DuMeta++, a unified dual meta-learning framework for generalizable, few-shot brain tissue segmentation across varying age groups. The method jointly learns an age-invariant feature extractor via meta-feature learning and a lightweight adaptable segmentation head via meta-initialization learning, reinforced by a memory-bank-based class-aware regularization that enforces longitudinal consistency without paired data. Extensive evaluations under few-shot settings demonstrate that DuMeta++ achieves superior performance compared to existing state-of-the-art methods in cross-age generalization.

{\small
\bibliographystyle{IEEEtran}
\bibliography{egbib}
}

\newpage
\onecolumn 
\appendices

\section{Convergence Proof of Proposed DuMeta++ Learning Algorithm}
% \begin{equation}
%     P_{k, t}^{GM} = \alpha \cdot P_{k, t - 1}^{GM} + (1 - \alpha) \cdot f_{k, t - 1}^{GM}
% \end{equation}
% \begin{equation*}
%     \mathcal{L}^{GM}_{\text{triplet}} = \left[ \left\| P_{k, t}^{GM} - f_{k, t}^{GM} \right\|^2 
%      - \left\| P_{k, t}^{GM} - f_{k, t}^{WM} \right\|^2 - \left\| P_{k, t}^{GM} - f_{k, t}^{CSF} \right\|^2 + \alpha \right]_+
% \end{equation*}

% \begin{equation}
%     P_{k, t}^{GM} = \frac{1}{N} \sum_{i=0}^{N-1} f_{k, t - i}^{GM}
% \end{equation}
This section provides the proofs of Theorems 1 and 2 in the paper.

Suppose that we have a dataset with $M$ samples $\{(x_i^{(m)},y_i^{(m)}),1\leq i\leq M\}$ with clean labels for the outer-loop, and the overall meta loss is,
\begin{align}
	L_{\text{outer1}}(\omega^*(\theta))=\frac{1}{M} \sum_{i=1}^M L_{\text{outer1}}^i(\omega^*(\theta)),
\end{align}
where $\omega^*$ is the parameter of the feature extractor, and $\theta$ is the parameter of the segmentation head. Let's suppose we have another $N$ training data, $\{(x_i,y_i),1\leq i \leq N\}$, where $M\ll N$, and the overall training loss is,

\begin{align}\label{eqob}
	L_{\text{inner}}(\omega;\theta) = \frac{1}{N} \sum_{i=1}^N L_{\text{inner}}^i(\omega),
\end{align}.

\begin{Theorem} \label{ths1}
Suppose the loss function $L_{\text{inner}}$ is Lipschitz smooth with constant $L$, and the gradient of $\theta$ with respect to the loss function $L_{\text{outer1}}$ is Lipschitz continuous with constant $L$. Let the learning rate $\alpha_t, \beta_t, 1\leq t\leq T$ be monotonically descent sequences, and satisfy $\alpha_t=\min\{\frac{1}{L},\frac{c_1}{\sqrt{T}}\}, \beta_t=\min\{\frac{1}{L},\frac{c_2}{\sqrt{T}}\}$, for some $c_1,c_2>0$, such that $\frac{\sqrt{T}}{c_1}\geq L, \frac{\sqrt{T}}{c_2}\geq L$. Meanwhile, they satisfy $\sum_{t=1}^\infty \alpha_t = \infty,\sum_{t=1}^\infty \alpha_t^2 < \infty ,\sum_{t=1}^\infty \beta_t = \infty,\sum_{t=1}^\infty \beta_t^2 < \infty $. Then DuMeta++ can achieve $\mathbb{E}[ \|\nabla L_{\text{outer1}}({\omega}^*_t(\theta_t))\|_2^2] \leq \epsilon$ in $\mathcal{O}(1/\epsilon^2)$ steps. More specifically,
	\begin{align}
		\min_{0\leq t \leq T} \mathbb{E}\left[ \left\|\nabla L_{\text{outer1}}({\omega}^*_t(\theta_t))\right\|_2^2\right] \leq \mathcal{O}(\frac{C}{\sqrt{T}}),
	\end{align}
	where $C$ is some constant independent of the convergence process.
\end{Theorem}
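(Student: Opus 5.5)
The plan is the standard nonconvex SGD analysis of meta-reweighting type bilevel schemes. We treat the composite map $\theta\mapsto L_{\text{outer1}}(\omega^*_t(\theta))$ as the objective and the outer update of Eq.~(\ref{eq:2}) as a stochastic gradient step on it. The gradient is evaluated on a mini-batch of the $M$ outer samples, with $\omega^*_t(\theta)$ produced by the one-step inner update Eq.~(\ref{eq:1}) on a mini-batch of the $N$ inner samples. Write the outer stochastic gradient as $\nabla L_{\text{outer1}}(\omega^*_t(\theta_t))+\xi_t$, with $\mathbb{E}[\xi_t]=0$ and $\mathbb{E}\|\xi_t\|^2\le\sigma^2$. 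Assume, as is implicit in the Lipschitz hypotheses, that the per-sample gradients $\nabla_\omega L_{\text{inner}}^i$ and $\nabla L_{\text{outer1}}^i$ are bounded by some $\rho$.

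\textbf{Step 1 (smoothness of the composite).} First show that $\theta\mapsto L_{\text{outer1}}(\omega^*(\theta))$ is $\tilde L$-smooth. Differentiate through Eq.~(\ref{eq:4}): the Jacobian $\partial\omega^*/\partial\theta=-\alpha\,\partial^2L_{\text{inner}}/\partial\omega\partial\theta$ has norm at most $\alpha L$ by smoothness of $L_{\text{inner}}$. By the chain rule in Eq.~(\ref{eq:3}), together with the Lipschitz continuity of the $\theta$-gradient of $L_{\text{outer1}}$ and the bounded gradients, the hypergradient is Lipschitz with a constant $\tilde L=L+\alpha\rho L+\alpha^2L^3$. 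Since $\alpha_t\le 1/L$, this is $\mathcal O(L+\rho)$ uniformly in $t$. This uniform constant is what ultimately enters $C$.

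\textbf{Step 2 (one-step descent).} Split the change of objective across one iteration into two parts:
\begin{align*}
&L_{\text{outer1}}(\omega^*_{t+1}(\theta_{t+1}))-L_{\text{outer1}}(\omega^*_{t}(\theta_{t}))\\
&=\big[L_{\text{outer1}}(\omega^*_{t+1}(\theta_{t+1}))-L_{\text{outer1}}(\omega^*_{t+1}(\theta_{t}))\big]+\big[L_{\text{outer1}}(\omega^*_{t+1}(\theta_{t}))-L_{\text{outer1}}(\omega^*_{t}(\theta_{t}))\big].
\end{align*}
\begin{itemize}
\item \emph{Second bracket (base-learner drift).} By $L$-smoothness in $\omega$, and since $\|\omega^*_{t+1}-\omega^*_t\|\le\alpha_t\rho$, this is at most $\alpha_t\rho^2+\tfrac{L}{2}\alpha_t^2\rho^2$.
\item \emph{First bracket.} By the descent lemma with $\tilde L$ and the update $\theta_{t+1}=\theta_t-\beta_t(\nabla+\xi_t)$, it is at most $-\beta_t\langle\nabla,\nabla+\xi_t\rangle+\tfrac{\tilde L}{2}\beta_t^2\|\nabla+\xi_t\|^2$.
\end{itemize}
Taking expectations and using $\beta_t\le 1/\tilde L$ after rescaling $c_2$ gives
\[
(\beta_t-\tfrac{\tilde L}{2}\beta_t^2)\,\mathbb{E}\|\nabla L_{\text{outer1}}(\omega^*_t(\theta_t))\|^2\le \mathbb{E}[\Delta_t]+\alpha_t\rho^2+\tfrac{L}{2}\alpha_t^2\rho^2+\tfrac{\tilde L}{2}\beta_t^2\sigma^2 .
\]

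\textbf{Step 3 (telescoping).} Sum over $t=1,\dots,T$, with the left coefficient bounded below by $\beta_t/2$. Then divide by $\sum_t\beta_t$ and substitute $\alpha_t=c_1/\sqrt T$, $\beta_t=c_2/\sqrt T$; the hypotheses $\sqrt T/c_i\ge L$ make these the active branches of the minima. This yields
\[
\min_t\mathbb{E}\|\nabla\|^2\le \frac{2\,(L_{\text{outer1}}(\omega^*_1(\theta_1))-\inf L_{\text{outer1}})}{c_2\sqrt T}+\frac{2\sum_t(\alpha_t\rho^2+L\alpha_t^2\rho^2+\tilde L\beta_t^2\sigma^2)}{c_2\sqrt T}.
\]
The $\alpha_t^2$ and $\beta_t^2$ sums are $\mathcal O(1)$, so those terms are $\mathcal O(1/\sqrt T)$. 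Setting the bound equal to $\epsilon$ gives the $\mathcal O(1/\epsilon^2)$ iteration count.

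\textbf{Main obstacle.} The first-order drift term $\sum_t\alpha_t\rho^2$ is the hard part: with $\alpha_t=c_1/\sqrt T$ it equals $c_1\sqrt T\rho^2$, which after division by $c_2\sqrt T$ gives only an $\mathcal O(1)$ bound, not $\mathcal O(1/\sqrt T)$. I would try to remove it in two ways.
\begin{itemize}
\item Bound the drift through the inner-loop descent itself: the step $\omega_t\to\omega^*_{t+1}$ decreases $L_{\text{inner}}$. Treat the inner and outer objectives jointly as a Lyapunov sum $L_{\text{outer1}}+L_{\text{inner}}$, so that the $\alpha_t$-linear term telescopes instead of accumulating.
\item Failing that, use the fact that in the MFL step the outer loss is evaluated at the freshly updated $\omega^*_t(\theta_{t-1})$. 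Then only the second-order term $\tfrac{L}{2}\alpha_t^2\rho^2$ survives, and the linear cross term is absorbed into the hypergradient.
\end{itemize}
Making this absorption rigorous is where I expect the real work, and possibly a mild extra assumption on the inner/outer loss correlation.
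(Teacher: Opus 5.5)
Your skeleton is the same as the paper's. It uses the same split of $L_{\text{outer1}}(\omega^*_{t+1}(\theta_{t+1}))-L_{\text{outer1}}(\omega^*_t(\theta_t))$ into a $\theta$-bracket, handled by the descent lemma, and a base-learner drift bracket, handled by smoothness in $\omega$. It then telescopes and divides by $\sum_t\beta_t\ge T\beta_T$. Your Step 1 is not needed, because the paper takes the $L$-Lipschitz continuity of the $\theta$-gradient of the composite as a hypothesis and uses it directly. Your derivation of $\tilde L$ would also need Lipschitz continuity of $\partial^2 L_{\text{inner}}/\partial\omega\,\partial\theta$, a third-order condition that is not assumed.

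The genuine gap is the one you flag yourself: as written, your argument gives only an $\mathcal{O}(1)$ bound. Bounding the drift by $\alpha_t\rho^2$ produces $\sum_{t\le T}\alpha_t\rho^2=c_1\rho^2\sqrt{T}$. The paper does not replace $\|\nabla_\omega L_{\text{inner}}(\omega_t;\theta_t)\|_2$ by $\rho$ in the first-order term. It splits off the zero-mean noise $\psi_t$ and applies Cauchy--Schwarz to keep $\alpha_t\rho\|\nabla_\omega L_{\text{inner}}(\omega_t;\theta_t)\|_2$. It then closes the argument by asserting $\lim_{T\to\infty}\sum_{t=1}^T\alpha_t\|\nabla_\omega L_{\text{inner}}(\omega_t;\theta_t)\|_2<\infty$. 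This is not derived from the stated hypotheses. With $\alpha_t=c_1/\sqrt{T}$ it amounts to requiring $\sum_{t\le T}\|\nabla_\omega L_{\text{inner}}(\omega_t;\theta_t)\|_2=\mathcal{O}(\sqrt{T})$, i.e., that the inner loop essentially converges. So it works as exactly the kind of extra assumption you anticipated.

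Neither of your repairs replaces that assumption. With a Lyapunov function $L_{\text{outer1}}+\kappa L_{\text{inner}}$ you gain $-\kappa\alpha_t\|\nabla_\omega L_{\text{inner}}\|_2^2$. But Young's inequality against $\alpha_t\|\nabla_\omega L_{\text{outer1}}\|_2\|\nabla_\omega L_{\text{inner}}\|_2$ still leaves $\alpha_t\|\nabla_\omega L_{\text{outer1}}\|_2^2/(4\kappa)$, again linear in $\alpha_t$. In addition, the $\theta$-step now perturbs $L_{\text{inner}}$ at first order in $\beta_t$. Absorbing the cross term into the hypergradient also fails. The hypergradient only accounts for how $\omega^*$ depends on $\theta$, whereas the drift comes from moving the base point $\omega_t\to\omega_{t+1}$, which changes the objective itself. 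To complete the proof along the paper's lines, keep the factor $\|\nabla_\omega L_{\text{inner}}(\omega_t;\theta_t)\|_2$ and state its weighted summability as an explicit assumption.
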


\begin{proof}
	The update equation of $\theta$ in each iteration is as follows:
	\begin{align*}
		\theta_{t+1} =  \theta_t -\beta_t \frac{1}{m}\sum_{i=1}^{m} \nabla_{ \theta} L_{\text{outer1}}^i({\omega}^* _{t+1}(\theta))\Big|_{\theta_t}.
	\end{align*}
	
Under the sampled mini-batch $\Xi_t$, the updating equation can be rewritten as:
	\begin{align*}
		\theta_{t+1} =  \theta_t -\beta_t \nabla_{ \theta}L_{\text{outer1}}({\omega}^*_{t+1}(\theta_t))\big|_{\Xi_t}.
	\end{align*}
	Since the mini-batch is drawn uniformly from the entire data set, the above update equation can be written as:
	\begin{align*}
		\theta_{t+1} =  \theta_t -\beta_t[ \nabla_{ \theta}L_{\text{outer1}}({\omega}^*_{t+1}(\theta_t))+\xi_t],
	\end{align*}
	where $\xi_t = \nabla_{ \theta}L_{\text{outer1}}({\omega}^*_{t+1}(\theta_t))\big|_{\Xi_t}-\nabla_{ \theta}L_{\text{outer1}}({\omega}^*_{t+1}(\theta_t))$. Note that $\xi_t$ are i.i.d random variable with finite variance, since $\Xi_t$ are drawn i.i.d with a finite number of samples. Furthermore, $\mathbb{E}[\xi_t]=0$, since samples are drawn uniformly at random.
	Observe that
		\begin{align}\label{eqthetas}
			\begin{split}
				&L_{\text{outer1}}({\omega}^*_{t+1}(\theta_{t+1}))-L_{\text{outer1}}({\omega}^*_t(\theta_t)) \\
				=& \left\{L_{\text{outer1}}({\omega}^*_{t+1}(\theta_{t+1}))- L_{\text{outer1}}({\omega}^*_{t+1}(\theta_t))\right\} +\left\{L_{\text{outer1}}({\omega}^*_{t+1}(\theta_t))-L_{\text{outer1}}({\omega}^*_t(\theta_t))\right\}.
			\end{split}
		\end{align}
	
	For the first term, by Lipschitz continuity of $\nabla_{\theta} L_{\text{outer1}}({\omega}^*_{t+1}(\theta))$, we can deduce that:
	\begin{align*}
		&L_{\text{outer1}}({\omega}^*_{t+1}(\theta_{t+1}))-L_{\text{outer1}}({\omega}^*_{t+1}(\theta_t))  \\
		\leq &\left \langle\nabla_{\theta} L_{\text{outer1}}({\omega}^*_{t+1}(\theta_t)),\theta_{t+1}-\theta_t \right\rangle + \frac{L}{2} \left\|\theta_{t+1}-\theta_t\right\|_2^2\\
		 = & \left\langle\nabla_{\theta} L_{\text{outer1}}({\omega}^*_{t+1}(\theta_t)), -\beta_t [\nabla_{\theta} L_{\text{outer1}}({\omega}^*_{t+1}(\theta_t))+\xi_t ] \right\rangle + \frac{L\beta_t^2}{2} \left\|\nabla_{\theta} L_{\text{outer1}}({\omega}^*_{t+1}(\theta_t))+\xi_t\right\|_2^2\\
		 =& -(\beta_t-\frac{L\beta_t^2}{2}) \left\|\nabla_{\theta} L_{\text{outer1}}({\omega}^*_{t+1}(\theta_t))\right\|_2^2
	+ \frac{L\beta_t^2}{2}\|\xi_t\|_2^2 - (\beta_t-L\beta_t^2)\langle \nabla_{\theta} L_{\text{outer1}}({\omega}^*_t(\theta_t)),\xi_t\rangle.
	\end{align*}

For the second term, by Lipschitz smoothness of the meta loss function $L_{\text{outer1}}({\omega}^*_{t+1}(\theta_{t+1}))$,  we have
\begin{align*}
	&L_{\text{outer1}}({\omega}^*_{t+1}(\theta_t))- L_{\text{outer1}}({\omega}^*_t(\theta_t)) \\
	\leq & \left\langle \nabla_{\omega} L_{\text{outer1}}({\omega}^*_t(\theta_t)), {\omega}^*_{t+1}(\theta_t)-{\omega}^*_t(\theta_t) \right\rangle
	+ \frac{L}{2}\|{\omega}^*_{t+1}(\theta_t)-{\omega}^*_t(\theta_t)\|_2^2.
\end{align*}
Since
\begin{align*}
	{\omega}^*_{t+1}(\theta_t)-{\omega}^*_t(\theta_t)
	= - \alpha_t \nabla_{\omega}L_{\text{inner}}(\omega_t;\theta_t)|_{\Psi_t},
\end{align*}
where $\Psi_t$ denotes the mini-batch drawn randomly from the training dataset  in the $t$-th iteration, $\nabla_{\omega}L_{\text{inner}}(\omega_t;\theta_t)=
\frac{1}{n} \sum_{i=1}^n \nabla_{\omega_t} L_{\text{inner}}^i(\omega)\Big|_{\omega_t}$. Since the mini-batch $\Psi_t$ is drawn uniformly at random, we can rewrite the update equation as:
\begin{align*}
	{\omega}^*_{t+1}(\theta_t)={\omega}^*_t(\theta_t)  - \alpha_t [\nabla_{\omega}L_{\text{inner}}(\omega_t;\theta_t)+\psi_t],
\end{align*}
where $\psi_t = \nabla_{\omega}L_{\text{inner}}(\omega_t;\theta_t)|_{\Psi_t}-\nabla_{\omega}L_{\text{inner}}(\omega_t;\theta_t)$.
Note that $\psi_t$ are i.i.d. random variables with finite variance, since $\Psi_t$ are drawn i.i.d. with a finite number of samples, and thus $\mathbb{E}[\psi_t]=0$,  $\mathbb{E}[\|\psi_t\|_2^2]\leq \sigma^2$.
Thus we have
\begin{align*}
	& L_{\text{outer1}}({\omega}^*_{t+1}(\theta_t))- L_{\text{outer1}}({\omega}^*_t(\theta_t))  \\
	\leq &  \left\langle \nabla_{\omega} L_{\text{outer1}}({\omega}^*_t(\theta_t)), - \alpha_t [\nabla_{\omega}L_{\text{inner}}(\omega_t;\theta_t)+\psi_t]\right\rangle
	 + \frac{L}{2}\left\|\alpha_t [\nabla_{\omega}L_{\text{inner}}(\omega_t;\theta_t)+\psi_t]\right\|_2^2                     \\
	= &  \frac{L\alpha_t^2}{2} \left\|\nabla_{\omega}L_{\text{inner}}(\omega_t;\theta_t)\right\|_2^2 -\alpha_t \left\langle  \nabla_{\omega} L_{\text{outer1}}({\omega}^*_t(\theta_t)),   \nabla_{\omega}L_{\text{inner}}(\omega_t;\theta_t)     \right\rangle
	+ \frac{L\alpha_t^2}{2}\left\|\psi_t\right\|_2^2 \\
	& +L\alpha_t^2\left\langle \nabla_{\omega}L_{\text{inner}}(\omega_t;\theta_t), \psi_t\right\rangle -\alpha_t \left\langle  \nabla_{\omega} L_{\text{outer1}}({\omega}^*_t(\theta_t)), \psi_t      \right\rangle \\
	\leq & \frac{L\alpha_t^2 \rho^2}{2} + \alpha_t \rho \left\|\nabla_{\omega}L_{\text{inner}}(\omega_t;\theta_t)\right\|_2  + \frac{L\sigma^2 \alpha_t^2}{2}+L\alpha_t^2\left\langle \nabla_{\omega}L_{\text{inner}}(\omega_t;\theta_t), \psi_t\right\rangle -\alpha_t \left\langle  \nabla_{\omega} L_{\text{outer1}}({\omega}^*_t(\theta_t)), \psi_t\right\rangle.
\end{align*}
The last inequality holds since $\left\langle  \nabla_{\omega} L_{\text{outer1}}({\omega}^*_t(\theta_t)),   \nabla_{\omega}L_{\text{inner}}(\omega_t;\theta_t)     \right\rangle \leq \left\|\nabla_{\omega}L_{\text{outer1}}({\omega}^*_t(\theta_t))\right\|_2\left\|\nabla_{\omega}L_{\text{inner}}(\omega_t;\theta_t)\right\|_2$. Thus Eq.(\ref{eqthetas}) satifies
\begin{align*}
	\begin{split}
		&L_{\text{outer1}}({\omega}^*_{t+1}(\theta_{t+1}))-L_{\text{outer1}}({\omega}^*_t(\theta_t)) \\
		\leq & \frac{L\alpha_t^2 \rho^2}{2} + \alpha_t \rho \left\|\nabla_{\omega}L_{\text{inner}}(\omega_t;\theta_t)\right\|_2+ \frac{L\sigma^2 \alpha_t^2}{2}+L\alpha_t^2\left\langle \nabla_{\omega}L_{\text{inner}}(\omega_t;\theta_t), \psi_t\right\rangle -\alpha_t \left\langle  \nabla_{\omega} L_{\text{outer1}}({\omega}^*_t(\theta_t)), \psi_t\right\rangle \\ &-(\beta_t-\frac{L\beta_t^2}{2}) \left\|\nabla_{\theta} L_{\text{outer1}}({\omega}^*_{t+1}(\theta_t))\right\|_2^2
		+ \frac{L\beta_t^2}{2}\|\xi_t\|_2^2 - (\beta_t-L\beta_t^2)\langle \nabla_{\theta} L_{\text{outer1}}({\omega}^*_t(\theta_t)),\xi_t\rangle.
	\end{split}
\end{align*}
Rearranging the terms, and taking expectations with respect to $\xi_t$ and $\psi_t$ on both sides, we can obtain
\begin{align*}
	\begin{split}
		&(\beta_t-\frac{L\beta_t^2}{2})
		\left\|\nabla_{\theta} L_{\text{outer1}}({\omega}^*_{t+1}(\theta_t))\right\|_2^2 \\
		\leq & \frac{L\alpha_t^2 \rho^2}{2} + \alpha_t \rho \left\|\nabla_{\omega}L_{\text{inner}}(\omega_t;\theta_t)\right\|_2 + \frac{L\sigma^2 \alpha_t^2}{2} +L_{\text{outer1}}({\omega}^*_t(\theta_t))-L_{\text{outer1}}({\omega}^*_{t+1}(\theta_{t+1}))
		+ \frac{L\beta_t^2}{2}\sigma^2,
	\end{split}
\end{align*}
	since $\mathbb{E}[\xi_t]=0,\mathbb{E}[\psi_t]=0$ and $\mathbb{E} [\|\xi_t\|_2^2] \leq \sigma^2$. Summing up the above inequalities, we can obtain
	\begin{align*}\label{eqrand}
		\begin{split}
			&\sum\nolimits_{t=1}^T (\beta_t-\frac{L\beta_t^2}{2})
			\left\|\nabla_{\theta} L_{\text{outer1}}({\omega}^*_{t+1}(\theta_t))\right\|_2^2 \\
			 \leq & L_{\text{outer1}}({\omega}^{*{(1)}}(\theta^{(1)})) -L_{\text{outer1}}({\omega}^{*{(T+1)}}(\theta^{(T+1)})) +
			\frac{L(\sigma^2+\rho^2)}{2} \sum_{t=1}^T\alpha_t^2 +\rho\sum_{t=1}^T\alpha_t \left\|\nabla_{\omega}L_{\text{inner}}(\omega_t;\theta_t)\right\|_2  +\frac{L}{2}\sum_{t=1}^T\beta_t^2\sigma^2 \\
			\leq & L_{\text{outer1}}({\omega}^{*{(1)}}(\theta^{(1)}))  +
			\frac{L(\sigma^2+\rho^2)}{2} \sum_{t=1}^T\alpha_t^2 +\rho\sum_{t=1}^T\alpha_t \left\|\nabla_{\omega}L_{\text{inner}}(\omega_t;\theta_t)\right\|_2  +\frac{L}{2}\sum_{t=1}^T\beta_t^2\sigma^2 .
%			&\leq L_{\text{outer1}}({\omega}^{*(1)})(\theta^{(1)}) +\sum_{t=1}^T\alpha_t\rho^2 (1+\frac{\alpha_t L}{2}) -\sum_{t=1}^T(\beta_t-L\beta_t^2)\langle \nabla_{\theta} L_{\text{outer1}}({\omega}^*_t(\theta_t)),\xi_t\rangle +\frac{L}{2}\sum_{t=1}^T\beta_t^2\|\xi_t\|_2^2,
		\end{split}
	\end{align*}
	%	\begin{align}\label{eqrand}
	%	\begin{split}
	%	& \sum_{t=1}^T (\beta_t-\frac{L\beta_t^2}{2}) \|\nabla L_{\text{outer1}}(\theta_t)\|_2^2 \\
	%	&\leq L_{\text{outer1}}(\theta^{(1)})-L_{\text{outer1}}(\theta^{(N+1)}) - \sum_{t=1}^T (\beta_t-L\beta_t^2)\langle \nabla L_{\text{outer1}}(\theta_t),\xi_t\rangle +\frac{L}{2}\sum_{t=1}^T \beta_t^2\|\xi_t\|_2^2 \\
	%	&\leq L_{\text{outer1}}(\theta^{(1)})-L_{\text{outer1}}(\theta^*) - \sum_{t=1}^T (\beta_t-L\beta_t^2)\langle \nabla L_{\text{outer1}}(\theta_t),\xi_t\rangle +\frac{L}{2}\sum_{t=1}^T \beta_t^2\|\xi_t\|_2^2,
	%	\end{split}
	%	\end{align}
	%	where the last inequality follows from the fact that $L_{\text{outer1}}(\theta_{t+1})\geq L_{\text{outer1}}(\theta^*)$.
%	Taking expectations with respect to $\xi^{(N)}$ on both sides of Eq. \ref{eqrand}, we can then obtain:
%		\begin{align*}
%			\begin{split}
%				& \sum_{t=1}^T (\beta_t-\frac{L\beta_t^2}{2})\mathbb{E}_{\xi^{(N)}} \left\|\nabla_{\theta} L_{\text{outer1}}({\omega}^*_{t+1}(\theta_t))\right\|_2^2
%			 \leq L_{\text{outer1}}({\omega}^{*(1)})(\theta^{(1)})+\sum_{t=1}^T\alpha_t\rho^2 (1+\frac{\alpha_t L}{2})+ \left(\frac{L\sigma^2}{2}+ \right)\sum_{t=1}^T \alpha_t^2 + \frac{L\sigma^2}{2} \sum_{t=1}^T \beta_t^2,
%			\end{split}
%		\end{align*}
	Furthermore, we can deduce that
	\begin{align*}
		\begin{split}
			&\min_{t} \mathbb{E} \left[ \left\|\nabla_{\theta} L_{\text{outer1}}({\omega}^*_{t+1}(\theta_t))\right\|_2^2 \right] \\
			\leq & \frac{\sum_{t=1}^T (\beta_t-\frac{L\beta_t^2}{2})\mathbb{E} \left\|\nabla_{\theta} L_{\text{outer1}}({\omega}^*_{t+1}(\theta_t))\right\|_2^2 }{\sum_{t=1}^{T} \left(\beta_t-\frac{L\beta_t^2}{2}\right)}\\
		\leq 	& \frac{1}{\sum_{t=1}^T (2\beta_t-L\beta_t^2)} \left[L_{\text{outer1}}({\omega}^{*{(1)}}(\theta^{(1)}))+\frac{L(\sigma^2+\rho^2)}{2} \sum_{t=1}^T\alpha_t^2 +\rho\sum_{t=1}^T\alpha_t \left\|\nabla_{\omega}L_{\text{inner}}(\omega_t;\theta_t)\right\|_2  +\frac{L}{2}\sum_{t=1}^T\beta_t^2\sigma^2  \right]\\
			 \leq & \frac{1}{\sum_{t=1}^T \beta_t} \left[L_{\text{outer1}}({\omega}^{*{(1)}}(\theta^{(1)}))+\frac{L(\sigma^2+\rho^2)}{2} \sum_{t=1}^T\alpha_t^2 +\rho\sum_{t=1}^T\alpha_t \left\|\nabla_{\omega}L_{\text{inner}}(\omega_t;\theta_t)\right\|_2  +\frac{L}{2}\sigma^2 \sum_{t=1}^T\beta_t^2 \right] \\
		 \leq	& \frac{1}{T\beta_T} \left[L_{\text{outer1}}({\omega}^{*(1)}(\theta^{(1)}))+\frac{L(\sigma^2+\rho^2)}{2} \sum_{t=1}^T\alpha_t^2 +\rho\sum_{t=1}^T\alpha_t \left\|\nabla_{\omega}L_{\text{inner}}(\omega_t;\theta_t)\right\|_2  +\frac{L}{2}\sigma^2 \sum_{t=1}^T\beta_t^2\right]\\
			%& = \frac{2L_{\text{outer1}}({\omega}^{*(1)})(\theta^{(1)})+L\rho^2 \sum_{t=1}^T \alpha_t^2 + L\sigma^2 \sum_{t=1}^T \beta_t^2 }{T} \frac{1}{\beta_T} + \frac{2\alpha_1 \rho^2}{\beta_T}\\
			 = &\frac{L_{\text{outer1}}({\omega}^{*(1)}(\theta^{(1)}))+\frac{L(\sigma^2+\rho^2)}{2} \sum_{t=1}^T\alpha_t^2 +\rho\sum_{t=1}^T\alpha_t \left\|\nabla_{\omega}L_{\text{inner}}(\omega_t;\theta_t)\right\|_2  +\frac{L}{2}\sigma^2 \sum_{t=1}^T\beta_t^2 }{T} \max\{L,\frac{\sqrt{T}}{c}\}   \\
			 = & \mathcal{O}(\frac{C}{\sqrt{T}}).
		\end{split}
	\end{align*}
	The third inequality holds since $\sum_{t=1}^T (2\beta_t-L\beta_t^2) \!=\! \sum_{t=1}^T \beta_t (2-L\beta_t) \!\geq\! \sum_{t=1}^T \beta_t$, and the final equality holds since $\lim_{T\to \infty} \sum_{t=1}^T\alpha_t^2 $\  $< \infty, \lim_{T\to \infty} \sum_{t=1}^T\beta_t^2 < \infty, \lim_{T\to \infty} \sum_{t=1}^T\alpha_t \left\|\nabla_{\omega}L_{\text{inner}}(\omega_t;\theta_t)\right\|_2 < \infty $.
	Thus we can conclude that our algorithm can always achieve $\min_{0\leq t \leq T} \mathbb{E}[ \|\nabla_{\theta} L_{\text{outer1}}(\theta_t)\|_2^2] \leq \mathcal{O}(\frac{C}{\sqrt{T}})$ in $T$ steps, and this finishes our proof of Theorem \ref{ths1}.
\end{proof}

%\begin{Lemma}\label{lemma2}
%	(Lemma A.5 in \cite{mairal2013stochastic}) Let $(a_n)_{n\leq 1},(b_n)_{n\leq 1}$ be two non-negative real sequences such that the series $\sum_{i=1}^{\infty} a_n$ diverges, the series $\sum_{i=1}^{\infty} a_nb_n$ converges, and there exists $K>0$ such that $|b_{n+1}-b_n|\leq K a_n$. Then the sequences $(b_n)_{n\leq 1}$ converges to 0.\\
%\end{Lemma}
\begin{Theorem} \label{ths2}
Suppose the loss function $L_{\text{outer1}}$ is Lipschitz smooth with constant $L$, and the gradient of $\phi$ with respect to the loss function $L_{\text{outer2}}$ is Lipschitz continuous with constant $L$. Let the learning rate $\alpha_t, \beta_t, 1\leq t\leq T$ be monotonically descent sequences, and satisfy $\alpha_t=\min\{\frac{1}{L},\frac{c_1}{\sqrt{T}}\}, \beta_t=\min\{\frac{1}{L},\frac{c_2}{\sqrt{T}}\}$, for some $c_1,c_2>0$, such that $\frac{\sqrt{T}}{c_1}\geq L, \frac{\sqrt{T}}{c_2}\geq L$. Meanwhile, they satisfy $\sum_{t=1}^\infty \alpha_t = \infty,\sum_{t=1}^\infty \alpha_t^2 < \infty ,\sum_{t=1}^\infty \beta_t = \infty,\sum_{t=1}^\infty \beta_t^2 < \infty $. Then DuMeta++ can achieve $\mathbb{E}[ \|\nabla L_{\text{outer2}}({\omega}^*_t(\phi_t))\|_2^2] \leq \epsilon$ in $\mathcal{O}(1/\epsilon^2)$ steps. More specifically,
	\begin{align}
		\min_{0\leq t \leq T} \mathbb{E}\left[ \left\|\nabla L_{\text{outer2}}({\omega}^*_t(\phi_t))\right\|_2^2\right] \leq \mathcal{O}(\frac{C}{\sqrt{T}}),
	\end{align}
	where $C$ is some constant independent of the convergence process.
\end{Theorem}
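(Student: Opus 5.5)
The plan is to mirror the proof of Theorem~\ref{ths1}, replacing the meta-learner $\theta$ by the initialization $\phi$ and $L_{\text{outer1}}$ by $L_{\text{outer2}}$. First, the MIL update in Eq.~(\ref{eq:11}) is written as a stochastic gradient step
\begin{align*}
\phi_{t+1} = \phi_t - \beta_t\bigl[\nabla_{\phi} L_{\text{outer2}}(\omega^*_{t+1}(\phi_t)) + \zeta_t\bigr],
\end{align*}
where $\zeta_t$ is the mini-batch noise of the outer-loop sample. It satisfies $\mathbb{E}[\zeta_t]=0$ and $\mathbb{E}\|\zeta_t\|_2^2\leq\sigma^2$, since the batches are drawn uniformly and i.i.d. Likewise, the shared inner step (Eq.~(\ref{equ:inner-mil})) is written as $\omega^*_{t+1}(\phi_t)=\omega^*_t(\phi_t)-\alpha_t[\nabla_\omega L_{\text{inner}}(\omega_t;\theta_t)+\psi_t]$, with the same noise $\psi_t$ as in Theorem~\ref{ths1}. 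This is legitimate because both outer loops share one inner loop, and $\theta$ is frozen during the MIL step.

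Next, the one-step change $L_{\text{outer2}}(\omega^*_{t+1}(\phi_{t+1}))-L_{\text{outer2}}(\omega^*_t(\phi_t))$ is split into a $\phi$-move and an $\omega$-move, exactly as in Eq.~(\ref{eqthetas}).

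For the $\phi$-move, the descent lemma applies through the assumed $L$-Lipschitz continuity of $\nabla_\phi L_{\text{outer2}}$. It gives the bound $-(\beta_t-\tfrac{L\beta_t^2}{2})\|\nabla_\phi L_{\text{outer2}}\|^2+\tfrac{L\beta_t^2}{2}\|\zeta_t\|^2$ plus a zero-mean cross term.

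For the $\omega$-move, Lipschitz smoothness of the loss in $\omega$ is used. This is the hypothesis on $L_{\text{outer1}}$, extended to $L_{\text{outer2}}$, which is the same Dice+CE segmentation loss without the regularizer. Together with a bounded-gradient constant $\rho$ for $\nabla_\omega L_{\text{outer2}}$, as implicitly used in Theorem~\ref{ths1}, it gives $\tfrac{L\alpha_t^2(\rho^2+\sigma^2)}{2}+\alpha_t\rho\|\nabla_\omega L_{\text{inner}}(\omega_t;\theta_t)\|_2$ plus zero-mean terms.

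Then the expectation is taken, the terms are rearranged, and the result is telescoped over $t=1,\dots,T$. The final value is dropped using $L_{\text{outer2}}\geq 0$. The minimum is then bounded by the weighted average with weights $\beta_t-\tfrac{L\beta_t^2}{2}\geq\beta_t/2$, which holds since $\beta_t\leq 1/L$. Finally, $\sum_t\beta_t\geq T\beta_T=T\min\{1/L,c_2/\sqrt T\}$ is used, so the prefactor is $\max\{L,\sqrt T/c_2\}/T=\mathcal O(1/\sqrt T)$. The numerator stays bounded by $\sum\alpha_t^2<\infty$ and $\sum\beta_t^2<\infty$.

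The main obstacle is the term $\rho\sum_t\alpha_t\|\nabla_\omega L_{\text{inner}}(\omega_t;\theta_t)\|_2$. It must be shown to be bounded for the constant to be independent of $T$. I would first try to bound it by $\rho\,G\sum_t\alpha_t$ with $\alpha_t=c_1/\sqrt T$, giving $\rho G c_1\sqrt T$. Multiplied by the $\mathcal O(1/\sqrt T)$ prefactor, this would only yield $\mathcal O(1)$, so it is not enough. Instead, I would use that the inner loop is itself SGD on the $L$-smooth $L_{\text{inner}}$. The standard descent argument then gives $\sum_t\alpha_t\mathbb E\|\nabla_\omega L_{\text{inner}}\|^2<\infty$, and a Cauchy--Schwarz step with $\sum\alpha_t^2<\infty$ or the square-summability assumption controls the first-power sum. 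Failing that, I would adopt the same convention as in Theorem~\ref{ths1} and take this sum as finite. A secondary point is that the second-order factor $\mathbf I-\alpha\nabla^2 L_{\text{inner}}$ in Eq.~(\ref{eq:10}) is absorbed into the Lipschitz constant of $\nabla_\phi L_{\text{outer2}}$, which the hypothesis provides directly.
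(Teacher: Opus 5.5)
Your outline reproduces the paper's proof essentially step for step. The shared steps are: the $\phi$-update rewritten as a noisy gradient step, and the split of $L_{\text{outer2}}(\omega^*_{t+1}(\phi_{t+1}))-L_{\text{outer2}}(\omega^*_t(\phi_t))$ into a $\phi$-move and an $\omega$-move. The descent lemma is applied to each, with the same tacit gradient bound $\rho$, followed by telescoping and the weighted-average bound $\sum_t(2\beta_t-L\beta_t^2)\ge\sum_t\beta_t\ge T\beta_T$ with $1/\beta_T=\max\{L,\sqrt{T}/c_2\}$. You also correctly located the weak point, the term $\rho\sum_t\alpha_t\|\nabla_\omega L_{\text{inner}}(\omega_t;\phi_t)\|_2$. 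The paper disposes of it only by asserting, in its final line, that this sum has a finite limit; the stated hypotheses do not imply this. Both you and the paper also use $L$-smoothness of $L_{\text{outer2}}$ in $\omega$, which the statement does not list. Your justification, that $L_{\text{outer2}}$ is $L_{\text{outer1}}$ without the regularizer, does not work, because smoothness of $L_{seg}+\lambda_2L_{reg}$ does not transfer to $L_{seg}$ alone. This has to be assumed outright.

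Your attempt to go beyond the paper does not work.
\begin{itemize}
\item Grant $\sum_t\alpha_t\mathbb{E}\|\nabla_\omega L_{\text{inner}}\|_2^2\le K$ uniformly in $T$. Cauchy--Schwarz still gives only $\sum_{t=1}^T\alpha_t\mathbb{E}\|\nabla_\omega L_{\text{inner}}\|_2\le(K\sum_{t=1}^T\alpha_t)^{1/2}=(Kc_1)^{1/2}T^{1/4}$. Pairing with $\sum_t\alpha_t^2=c_1^2$ instead leaves $\sum_t\|\nabla_\omega L_{\text{inner}}\|_2^2\le K\sqrt{T}/c_1$, which is the same order.
\item After the prefactor $\max\{L,\sqrt{T}/c_2\}/T$, this gives $\mathcal{O}(T^{-1/4})$, i.e.\ $\mathcal{O}(1/\epsilon^4)$ steps, not the claimed rate.
\item More generally, $\sum_t\alpha_tg_t^2<\infty$ together with $\sum_t\alpha_t=\infty$ does not imply $\sum_t\alpha_tg_t<\infty$. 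Take $\alpha_t=1/t$ and $g_t=1/\log t$.
\item The premise is also not the ``standard'' SGD fact you suggest. In Algorithm~\ref{alg1} the head is moved by the MIL outer update (Eq.~(\ref{eq:11})) as well as by the inner step. In addition, $L_{\text{inner}}(\cdot;\theta_t)$ drifts as $\theta$ is updated via Eq.~(\ref{eq:5}). So $L_{\text{inner}}$ does not telescope along the iterates without drift terms that can accumulate to order $\sum_t\beta_t\sim\sqrt{T}$.
\end{itemize}

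Your route therefore reaches the stated $\mathcal{O}(C/\sqrt{T})$ bound only through your fallback: boundedness of $\sum_t\alpha_t\|\nabla_\omega L_{\text{inner}}\|_2$ uniformly in $T$. That is exactly the paper's unjustified step. You should state it as an explicit additional hypothesis rather than as something derivable.
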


\begin{proof}
	The update equation of $\phi$ in each iteration is as follows:
	\begin{align*}
		\phi_{t+1} =  \phi_t -\beta_t \frac{1}{m}\sum_{i=1}^{m} \nabla_{ \phi} L_{\text{outer2}}^i({\omega}^* _{t+1}(\phi))\Big|_{\phi_t}.
	\end{align*}
	
Under the sampled mini-batch $\Xi_t$, the updating equation can be rewritten as:
	\begin{align*}
		\phi_{t+1} =  \phi_t -\beta_t \nabla_{ \phi}L_{\text{outer2}}({\omega}^*_{t+1}(\phi_t))\big|_{\Xi_t}.
	\end{align*}
	Since the mini-batch is drawn uniformly from the entire data set, the above update equation can be written as:
	\begin{align*}
		\phi_{t+1} =  \phi_t -\beta_t[ \nabla_{ \phi}L_{\text{outer2}}({\omega}^*_{t+1}(\phi_t))+\xi_t],
	\end{align*}
	where $\xi_t = \nabla_{ \phi}L_{\text{outer2}}({\omega}^*_{t+1}(\phi_t))\big|_{\Xi_t}-\nabla_{ \phi}L_{\text{outer2}}({\omega}^*_{t+1}(\phi_t))$. Note that $\xi_t$ are i.i.d random variable with finite variance, since $\Xi_t$ are drawn i.i.d with a finite number of samples. Furthermore, $\mathbb{E}[\xi_t]=0$, since samples are drawn uniformly at random.
	Observe that
		\begin{align}\label{eqphis}
			\begin{split}
				&L_{\text{outer2}}({\omega}^*_{t+1}(\phi_{t+1}))-L_{\text{outer2}}({\omega}^*_t(\phi_t)) \\
				=& \left\{L_{\text{outer2}}({\omega}^*_{t+1}(\phi_{t+1}))- L_{\text{outer2}}({\omega}^*_{t+1}(\phi_t))\right\} +\left\{L_{\text{outer2}}({\omega}^*_{t+1}(\phi_t))-L_{\text{outer2}}({\omega}^*_t(\phi_t))\right\}.
			\end{split}
		\end{align}
	
	For the first term, by Lipschitz continuity of $\nabla_{\phi} L_{\text{outer2}}({\omega}^*_{t+1}(\phi))$, we can deduce that:
	\begin{align*}
		&L_{\text{outer2}}({\omega}^*_{t+1}(\phi_{t+1}))-L_{\text{outer2}}({\omega}^*_{t+1}(\phi_t))  \\
		\leq &\left \langle\nabla_{\phi} L_{\text{outer2}}({\omega}^*_{t+1}(\phi_t)),\phi_{t+1}-\phi_t \right\rangle + \frac{L}{2} \left\|\phi_{t+1}-\phi_t\right\|_2^2\\
		 = & \left\langle\nabla_{\phi} L_{\text{outer2}}({\omega}^*_{t+1}(\phi_t)), -\beta_t [\nabla_{\phi} L_{\text{outer2}}({\omega}^*_{t+1}(\phi_t))+\xi_t ] \right\rangle + \frac{L\beta_t^2}{2} \left\|\nabla_{\phi} L_{\text{outer2}}({\omega}^*_{t+1}(\phi_t))+\xi_t\right\|_2^2\\
		 =& -(\beta_t-\frac{L\beta_t^2}{2}) \left\|\nabla_{\phi} L_{\text{outer2}}({\omega}^*_{t+1}(\phi_t))\right\|_2^2
	+ \frac{L\beta_t^2}{2}\|\xi_t\|_2^2 - (\beta_t-L\beta_t^2)\langle \nabla_{\phi} L_{\text{outer2}}({\omega}^*_t(\phi_t)),\xi_t\rangle.
	\end{align*}

For the second term, by Lipschitz smoothness of the meta loss function $L_{\text{outer2}}({\omega}^*_{t+1}(\phi_{t+1}))$,  we have
\begin{align*}
	&L_{\text{outer2}}({\omega}^*_{t+1}(\phi_t))- L_{\text{outer2}}({\omega}^*_t(\phi_t)) \\
	\leq & \left\langle \nabla_{\omega} L_{\text{outer2}}({\omega}^*_t(\phi_t)), {\omega}^*_{t+1}(\phi_t)-{\omega}^*_t(\phi_t) \right\rangle
	+ \frac{L}{2}\|{\omega}^*_{t+1}(\phi_t)-{\omega}^*_t(\phi_t)\|_2^2.
\end{align*}
Since
\begin{align*}
	{\omega}^*_{t+1}(\phi_t)-{\omega}^*_t(\phi_t)
	= - \alpha_t \nabla_{\omega}L_{\text{inner}}(\omega_t;\phi_t)|_{\Psi_t},
\end{align*}
where $\Psi_t$ denotes the mini-batch drawn randomly from the training dataset  in the $t$-th iteration, $\nabla_{\omega}L_{\text{inner}}(\omega_t;\phi_t)=
\frac{1}{n} \sum_{i=1}^n \nabla_{\omega_t} L_{\text{inner}}^i(\omega)\Big|_{\omega_t}$. Since the mini-batch $\Psi_t$ is drawn uniformly at random, we can rewrite the update equation as:
\begin{align*}
	{\omega}^*_{t+1}(\phi_t)={\omega}^*_t(\phi_t)  - \alpha_t [\nabla_{\omega}L_{\text{inner}}(\omega_t;\phi_t)+\psi_t],
\end{align*}
where $\psi_t = \nabla_{\omega}L_{\text{inner}}(\omega_t;\phi_t)|_{\Psi_t}-\nabla_{\omega}L_{\text{inner}}(\omega_t;\phi_t)$.
Note that $\psi_t$ are i.i.d. random variables with finite variance, since $\Psi_t$ are drawn i.i.d. with a finite number of samples, and thus $\mathbb{E}[\psi_t]=0$,  $\mathbb{E}[\|\psi_t\|_2^2]\leq \sigma^2$.
Thus we have
\begin{align*}
	& L_{\text{outer2}}({\omega}^*_{t+1}(\phi_t))- L_{\text{outer2}}({\omega}^*_t(\phi_t))  \\
	\leq &  \left\langle \nabla_{\omega} L_{\text{outer2}}({\omega}^*_t(\phi_t)), - \alpha_t [\nabla_{\omega}L_{\text{inner}}(\omega_t;\phi_t)+\psi_t]\right\rangle
	 + \frac{L}{2}\left\|\alpha_t [\nabla_{\omega}L_{\text{inner}}(\omega_t;\phi_t)+\psi_t]\right\|_2^2                     \\
	= &  \frac{L\alpha_t^2}{2} \left\|\nabla_{\omega}L_{\text{inner}}(\omega_t;\phi_t)\right\|_2^2 -\alpha_t \left\langle  \nabla_{\omega} L_{\text{outer2}}({\omega}^*_t(\phi_t)),   \nabla_{\omega}L_{\text{inner}}(\omega_t;\phi_t)     \right\rangle
	+ \frac{L\alpha_t^2}{2}\left\|\psi_t\right\|_2^2 \\
	& +L\alpha_t^2\left\langle \nabla_{\omega}L_{\text{inner}}(\omega_t;\phi_t), \psi_t\right\rangle -\alpha_t \left\langle  \nabla_{\omega} L_{\text{outer2}}({\omega}^*_t(\phi_t)), \psi_t      \right\rangle \\
	\leq & \frac{L\alpha_t^2 \rho^2}{2} + \alpha_t \rho \left\|\nabla_{\omega}L_{\text{inner}}(\omega_t;\phi_t)\right\|_2  + \frac{L\sigma^2 \alpha_t^2}{2}+L\alpha_t^2\left\langle \nabla_{\omega}L_{\text{inner}}(\omega_t;\phi_t), \psi_t\right\rangle -\alpha_t \left\langle  \nabla_{\omega} L_{\text{outer2}}({\omega}^*_t(\phi_t)), \psi_t\right\rangle.
\end{align*}
The last inequality holds since $\left\langle  \nabla_{\omega} L_{\text{outer2}}({\omega}^*_t(\phi_t)),   \nabla_{\omega}L_{\text{inner}}(\omega_t;\phi_t)     \right\rangle \leq \left\|\nabla_{\omega}L_{\text{outer2}}({\omega}^*_t(\phi_t))\right\|_2\left\|\nabla_{\omega}L_{\text{inner}}(\omega_t;\phi_t)\right\|_2$. Thus Eq.(\ref{eqphis}) satifies
\begin{align*}
	\begin{split}
		&L_{\text{outer2}}({\omega}^*_{t+1}(\phi_{t+1}))-L_{\text{outer2}}({\omega}^*_t(\phi_t)) \\
		\leq & \frac{L\alpha_t^2 \rho^2}{2} + \alpha_t \rho \left\|\nabla_{\omega}L_{\text{inner}}(\omega_t;\phi_t)\right\|_2+ \frac{L\sigma^2 \alpha_t^2}{2}+L\alpha_t^2\left\langle \nabla_{\omega}L_{\text{inner}}(\omega_t;\phi_t), \psi_t\right\rangle -\alpha_t \left\langle  \nabla_{\omega} L_{\text{outer2}}({\omega}^*_t(\phi_t)), \psi_t\right\rangle \\ &-(\beta_t-\frac{L\beta_t^2}{2}) \left\|\nabla_{\phi} L_{\text{outer2}}({\omega}^*_{t+1}(\phi_t))\right\|_2^2
		+ \frac{L\beta_t^2}{2}\|\xi_t\|_2^2 - (\beta_t-L\beta_t^2)\langle \nabla_{\phi} L_{\text{outer2}}({\omega}^*_t(\phi_t)),\xi_t\rangle.
	\end{split}
\end{align*}
Rearranging the terms, and taking expectations with respect to $\xi_t$ and $\psi_t$ on both sides, we can obtain
\begin{align*}
	\begin{split}
		&(\beta_t-\frac{L\beta_t^2}{2})
		\left\|\nabla_{\phi} L_{\text{outer2}}({\omega}^*_{t+1}(\phi_t))\right\|_2^2 \\
		\leq & \frac{L\alpha_t^2 \rho^2}{2} + \alpha_t \rho \left\|\nabla_{\omega}L_{\text{inner}}(\omega_t;\phi_t)\right\|_2 + \frac{L\sigma^2 \alpha_t^2}{2} +L_{\text{outer2}}({\omega}^*_t(\phi_t))-L_{\text{outer2}}({\omega}^*_{t+1}(\phi_{t+1}))
		+ \frac{L\beta_t^2}{2}\sigma^2,
	\end{split}
\end{align*}
	since $\mathbb{E}[\xi_t]=0,\mathbb{E}[\psi_t]=0$ and $\mathbb{E} [\|\xi_t\|_2^2] \leq \sigma^2$. Summing up the above inequalities, we can obtain
	\begin{align*}\label{eqrand}
		\begin{split}
			&\sum\nolimits_{t=1}^T (\beta_t-\frac{L\beta_t^2}{2})
			\left\|\nabla_{\phi} L_{\text{outer2}}({\omega}^*_{t+1}(\phi_t))\right\|_2^2 \\
			 \leq & L_{\text{outer2}}({\omega}^{*(1)}(\phi^{(1)})) -L_{\text{outer2}}({\omega}^{*(T+1)}(\phi^{(T+1)})) +
			\frac{L(\sigma^2+\rho^2)}{2} \sum_{t=1}^T\alpha_t^2 +\rho\sum_{t=1}^T\alpha_t \left\|\nabla_{\omega}L_{\text{inner}}(\omega_t;\phi_t)\right\|_2  +\frac{L}{2}\sum_{t=1}^T\beta_t^2\sigma^2 \\
			\leq & L_{\text{outer2}}({\omega}^{*(1)}(\phi^{(1)}))  +
			\frac{L(\sigma^2+\rho^2)}{2} \sum_{t=1}^T\alpha_t^2 +\rho\sum_{t=1}^T\alpha_t \left\|\nabla_{\omega}L_{\text{inner}}(\omega_t;\phi_t)\right\|_2  +\frac{L}{2}\sum_{t=1}^T\beta_t^2\sigma^2 .
%			&\leq L_{\text{outer2}}({\omega}^{*(1)})(\phi^{(1)}) +\sum_{t=1}^T\alpha_t\rho^2 (1+\frac{\alpha_t L}{2}) -\sum_{t=1}^T(\beta_t-L\beta_t^2)\langle \nabla_{\phi} L_{\text{outer2}}({\omega}^*_t(\phi_t)),\xi_t\rangle +\frac{L}{2}\sum_{t=1}^T\beta_t^2\|\xi_t\|_2^2,
		\end{split}
	\end{align*}
	%	\begin{align}\label{eqrand}
	%	\begin{split}
	%	& \sum_{t=1}^T (\beta_t-\frac{L\beta_t^2}{2}) \|\nabla L_{\text{outer2}}(\phi_t)\|_2^2 \\
	%	&\leq L_{\text{outer2}}(\phi^{(1)})-L_{\text{outer2}}(\phi^{(N+1)}) - \sum_{t=1}^T (\beta_t-L\beta_t^2)\langle \nabla L_{\text{outer2}}(\phi_t),\xi_t\rangle +\frac{L}{2}\sum_{t=1}^T \beta_t^2\|\xi_t\|_2^2 \\
	%	&\leq L_{\text{outer2}}(\phi^{(1)})-L_{\text{outer2}}(\phi^*) - \sum_{t=1}^T (\beta_t-L\beta_t^2)\langle \nabla L_{\text{outer2}}(\phi_t),\xi_t\rangle +\frac{L}{2}\sum_{t=1}^T \beta_t^2\|\xi_t\|_2^2,
	%	\end{split}
	%	\end{align}
	%	where the last inequality follows from the fact that $L_{\text{outer2}}(\phi_{t+1})\geq L_{\text{outer2}}(\phi^*)$.
%	Taking expectations with respect to $\xi^{(N)}$ on both sides of Eq. \ref{eqrand}, we can then obtain:
%		\begin{align*}
%			\begin{split}
%				& \sum_{t=1}^T (\beta_t-\frac{L\beta_t^2}{2})\mathbb{E}_{\xi^{(N)}} \left\|\nabla_{\phi} L_{\text{outer2}}({\omega}^*_{t+1}(\phi_t))\right\|_2^2
%			 \leq L_{\text{outer2}}({\omega}^{*(1)})(\phi^{(1)})+\sum_{t=1}^T\alpha_t\rho^2 (1+\frac{\alpha_t L}{2})+ \left(\frac{L\sigma^2}{2}+ \right)\sum_{t=1}^T \alpha_t^2 + \frac{L\sigma^2}{2} \sum_{t=1}^T \beta_t^2,
%			\end{split}
%		\end{align*}
	Furthermore, we can deduce that
	\begin{align*}
		\begin{split}
			&\min_{t} \mathbb{E} \left[ \left\|\nabla_{\phi} L_{\text{outer2}}({\omega}^*_{t+1}(\phi_t))\right\|_2^2 \right] \\
			\leq & \frac{\sum_{t=1}^T (\beta_t-\frac{L\beta_t^2}{2})\mathbb{E} \left\|\nabla_{\phi} L_{\text{outer2}}({\omega}^*_{t+1}(\phi_t))\right\|_2^2 }{\sum_{t=1}^{T} \left(\beta_t-\frac{L\beta_t^2}{2}\right)}\\
		\leq 	& \frac{1}{\sum_{t=1}^T (2\beta_t-L\beta_t^2)} \left[L_{\text{outer2}}({\omega}^{*(1)}(\phi^{(1)}))+\frac{L(\sigma^2+\rho^2)}{2} \sum_{t=1}^T\alpha_t^2 +\rho\sum_{t=1}^T\alpha_t \left\|\nabla_{\omega}L_{\text{inner}}(\omega_t;\phi_t)\right\|_2  +\frac{L}{2}\sum_{t=1}^T\beta_t^2\sigma^2  \right]\\
			 \leq & \frac{1}{\sum_{t=1}^T \beta_t} \left[L_{\text{outer2}}({\omega}^{*(1)}(\phi^{(1)}))+\frac{L(\sigma^2+\rho^2)}{2} \sum_{t=1}^T\alpha_t^2 +\rho\sum_{t=1}^T\alpha_t \left\|\nabla_{\omega}L_{\text{inner}}(\omega_t;\phi_t)\right\|_2  +\frac{L}{2}\sigma^2 \sum_{t=1}^T\beta_t^2 \right] \\
		 \leq	& \frac{1}{T\beta_T} \left[L_{\text{outer2}}({\omega}^{*(1)}(\phi^{(1)}))+\frac{L(\sigma^2+\rho^2)}{2} \sum_{t=1}^T\alpha_t^2 +\rho\sum_{t=1}^T\alpha_t \left\|\nabla_{\omega}L_{\text{inner}}(\omega_t;\phi_t)\right\|_2  +\frac{L}{2}\sigma^2 \sum_{t=1}^T\beta_t^2\right]\\
			%& = \frac{2L_{\text{outer2}}({\omega}^{*(1)})(\phi^{(1)})+L\rho^2 \sum_{t=1}^T \alpha_t^2 + L\sigma^2 \sum_{t=1}^T \beta_t^2 }{T} \frac{1}{\beta_T} + \frac{2\alpha_1 \rho^2}{\beta_T}\\
			 = &\frac{L_{\text{outer2}}({\omega}^{*(1)}(\phi^{(1)}))+\frac{L(\sigma^2+\rho^2)}{2} \sum_{t=1}^T\alpha_t^2 +\rho\sum_{t=1}^T\alpha_t \left\|\nabla_{\omega}L_{\text{inner}}(\omega_t;\phi_t)\right\|_2  +\frac{L}{2}\sigma^2 \sum_{t=1}^T\beta_t^2 }{T} \max\{L,\frac{\sqrt{T}}{c}\}   \\
			 = & \mathcal{O}(\frac{C}{\sqrt{T}}).
		\end{split}
	\end{align*}
	The third inequality holds since $\sum_{t=1}^T (2\beta_t-L\beta_t^2) \!=\! \sum_{t=1}^T \beta_t (2-L\beta_t) \!\geq\! \sum_{t=1}^T \beta_t$, and the final equality holds since $\lim_{T\to \infty} \sum_{t=1}^T\alpha_t^2 $\  $< \infty, \lim_{T\to \infty} \sum_{t=1}^T\beta_t^2 < \infty, \lim_{T\to \infty} \sum_{t=1}^T\alpha_t \left\|\nabla_{\omega}L_{\text{inner}}(\omega_t;\phi_t)\right\|_2 < \infty $.
	Thus we can conclude that our algorithm can always achieve $\min_{0\leq t \leq T} \mathbb{E}[ \|\nabla_{\phi} L_{\text{outer2}}(\phi_t)\|_2^2] \leq \mathcal{O}(\frac{C}{\sqrt{T}})$ in $T$ steps, and this finishes our proof of Theorem \ref{ths2}.
\end{proof}

\section{Additional Implementation Details}
\subsection{Architectures}
Table \ref{tab:U-Net} provides the U-Net architecture utilized for both pretraining and finetuning.
Four similar layers as layer 27 are attached following
layers 14, 17, 20, 23 for deep supervision.

\subsection{Additional Training Details}
Layers 0 to 11 are the encoder and layers 12 to 27 are the decoder. As for  different base/meta-learner splits in Table 4 in the manuccript, ``ft 5 umsample layers" means fine-tuning layers 12 to 27; ``ft 5 umsample layers" means fine-tuning layers 12 to 27; ``ft 4 umsample layers" means fine-tuning layers 15 to 27; ``ft 3 umsample layers" means fine-tuning layers 18 to 27; ``ft 2 umsample layers" means fine-tuning layers 21 to 27; ``ft 1 umsample layers" means fine-tuning layers 24 to 27.
Our class-aware regularizations are applied in layers \{1, 3, 5, 7, 9, 14, 17, 20, 23, 26\}.

% Please add the following required packages to your document preamble:
% \usepackage{booktabs}
\begin{table}[h]
\centering
\caption{U-Net architecture. IN: Instance Normalization. The batch size dimension is denoted as b. c (32) is the starting
channel width multiplier of the model. n (4) indicates the number of output channels and is set to the number of labels. The up- and down- sampling layers are in bold. We omit deep supervision layers.}
\label{tab:U-Net}
\begin{tabular}{@{}lll@{}}
\toprule
id & Layer                                                   & Output size          \\ \midrule
0  & Conv3D(1,c,3,1,1), IN, ReLU                             & b,c,h,w,d            \\
1  & Conv3D(c,c,3,1,1), IN, ReLU                             & b,c,h,w,d            \\
2  & \textbf{Conv3D(c,2c,3,2,1), IN, ReLU}                   & b,2c,h/2,w/2,d/2     \\
3  & Conv3D(2c,2c,3,1,1), IN, ReLU                           & b,2c,h/2,w/2,d/2     \\
4  & \textbf{Conv3D(2c,4c,3,2,1), IN, ReLU}                  & b,4c,h/4,w/4,d/4     \\
5  & Conv3D(4c,4c,3,1,1), IN, ReLU                           & b,4c,h/4,w/4,d/4     \\
6  & \textbf{Conv3D(4c,8c,3,2,1), IN, ReLU}                  & b,8c,h/8,w/8,d/8     \\
7  & Conv3D(8c,8c,3,1,1), IN, ReLU                           & b,8c,h/8,w/8,d/8     \\
8  & \textbf{Conv3D(8c,10c,3,2,1), IN, ReLU}                 & b,10c,h/16,w/16,d/16 \\
9  & Conv3D(10c,10c,3,1,1), IN, ReLU                         & b,10c,h/16,w/16,d/16 \\
10 & \textbf{Conv3D(10c,10c,3,2,1), IN, ReLU}                & b,10c,h/32,w/32,d/32 \\
11 & Conv3D(10c,10c,3,1,1), IN, ReLU                         & b,10c,h/32,w/32,d/32 \\
12 & \textbf{TransConv3D(10c,10c,2,2,0), Concatenate with 9} & b,20c,h/16,w/16,d/16 \\
13 & Conv3D(20c,10c,3,1,1), IN, ReLU                         & b,10c,h/16,w/16,d/16 \\
14 & Conv3D(10c,10c,3,1,1), IN, ReLU                         & b,10c,h/16,w/16,d/16 \\
15 & \textbf{TransConv3D(10c,8c,2,2,0), Concatenate with 7}  & b,16c,h/8,w/8,d/8    \\
16 & Conv3D(16c,8c,3,1,1), IN, ReLU                          & b,8c,h/8,w/8,d/8     \\
17 & Conv3D(8c,8c,3,1,1), IN, ReLU                           & b,8c,h/8,w/8,d/8     \\
18 & \textbf{TransConv3D(8c,4c,2,2,0), Concatenate with 5}   & b,8c,h/4,w/4,d/4     \\
19 & Conv3D(8c,4c,3,1,1), IN, ReLU                           & b,4c,h/4,w/4,d/4     \\
20 & Conv3D(4c,4c,3,1,1), IN, ReLU                           & b,4c,h/4,w/4,d/4     \\
21 & \textbf{TransConv3D(4c,2c,2,2,0), Concatenate with 3}   & b,4c,h/2,w/2,d/2     \\
22 & Conv3D(4c,2c,3,1,1), IN, ReLU                           & b,2c,h/2,w/2,d/2     \\
23 & Conv3D(2c,2c,3,1,1), IN, ReLU                           & b,2c,h/2,w/2,d/2     \\
24 & \textbf{TransConv3D(2c,c,2,2,0), Concatenate with 1}    & b,2c,h,w,d           \\
25 & Conv3D(2c,c,3,1,1), IN, ReLU                            & b,c,h,w,d            \\
26 & Conv3D(c,c,3,1,1), IN, ReLU                             & b,c,h,w,d            \\
27 & Conv3D(c,n,1,1,0), IN, ReLU, Softmax                    & b,n,h,w,d            \\ \bottomrule
\end{tabular}
\end{table}

\end{document}